\newtheorem{Thm}{Theorem}[section]
\newtheorem{Lem}[Thm]{Lemma}
\newtheorem{Exm}[Thm]{Example}
\newcounter{protocol}
\newcounter{protocol*}
\newenvironment{protocol*}[1][htb]{%
    \let\c@algorithm\c@protocol
    \renewcommand{\ALG@name}{Protocol}% Update algorithm name
    \begin{algorithm*}[#1]%
    }{\end{algorithm*}
}
\newcommand{\ceil}[1]{\left\lceil#1 \right\rceil}
\newcommand{\round}[1]{\left\lfloor#1 \right\rceil}
\newcommand{\abs}[1]{\left|#1 \right|}
\newcommand{\me}[2]{\widetilde{#1}\left(#2\right)}
\renewcommand{\bf}[1]{\ifmmode\mathbf{#1}\else\bfseries{#1}\fi}
\renewcommand{\tt}[1]{\ensuremath{\texttt{#1}}}
\newcommand{\com}[1]{\ensuremath{\tt{com}_{#1}}}
\newcommand{\negl}[1]{\ensuremath{\tt{negl}\left({#1}\right)}}
\renewcommand{\paragraph}[1]{\medskip\noindent\textbf{#1.}}
\def\F{\mathbb{F}}
\def\G{\mathbb{G}}
\def\1{\mathbbm{1}}
\newcommand{\case}[2][lllllllllllllllllllllllllllllllllllll]{\left\{\begin{array}{#1}#2 \\ \end{array}\right.}
\begin{document}

\date{}

\title{zkDL: Efficient Zero-Knowledge Proofs of Deep Learning Training}
\author{Haochen Sun, Tonghe Bai, Jason Li, Hongyang Zhang}
\maketitle

\begin{abstract}
The recent advancements in deep learning have brought about significant changes in various aspects of people's lives. Meanwhile, these rapid developments have raised concerns about the legitimacy of the training process of deep neural networks. To protect the intellectual properties of AI developers, directly examining the training process by accessing the model parameters and training data is often prohibited for verifiers.

In response to this challenge, we present zero-knowledge deep learning (zkDL), an efficient zero-knowledge proof for deep learning training. To address the long-standing challenge of verifiable computations of non-linearities in deep learning training, we introduce zkReLU, a specialized proof for the ReLU activation and its backpropagation. zkReLU turns the disadvantage of non-arithmetic relations into an advantage, leading to the creation of FAC4DNN, our specialized arithmetic circuit design for modelling neural networks. This design aggregates the proofs over different layers and training steps, without being constrained by their sequential order in the training process.

With our new CUDA implementation that achieves full compatibility with the tensor structures and the aggregated proof design, zkDL enables the generation of complete and sound proofs in less than a second per batch update for an 8-layer neural network with 10M parameters and a batch size of 64, while provably ensuring the privacy of data and model parameters. To our best knowledge, we are not aware of any existing work on zero-knowledge proof of
deep learning \emph{training} that is scalable to million-size networks.
\end{abstract}
%\hy{mention how many times speedup compared with sota? mention what is the training time per batch?}

\section{Introduction}

The rapid development of deep learning has garnered unprecedented attention over the past decade. However, with these advancements, concerns about the legitimacy of deep learning training have also arisen. In March 2023, Italy became the first Western country to ban ChatGPT amid an investigation into potential violations of the European Union's General Data Protection Regulation (GDPR). Furthermore, in January 2023, Stable Diffusion, a prominent image-generative model, faced accusations from a group of artist representatives over the infringement of copyrights on millions of images in its training data. As governments continue to impose new regulatory requirements on increasingly advanced AI technologies, there is an urgent need to develop a protocol that verifies the legitimacy of the training process for deep learning models. However, due to intellectual property and business secret concerns, model owners are typically hesitant to disclose their proprietary training data or model snapshots for legitimacy investigations.

Despite considerable efforts in verifiable machine learning to address this dilemma, many fundamental questions remain unanswered. Currently, cryptography-based approaches have mainly focused on inference-time verification \cite{zkcnn, zen, vcnn, pvcnn, zkml, mystique, ezdps}, leaving training-time verification largely unexplored because of the significant computational demands and complex operations involved. Pioneering works in verifiable training \cite{veriml, unlearning, unlearning-2} have mostly concentrated on basic machine learning algorithms, such as linear regression, logistic regression, and SVMs. The deep learning algorithms explored are limited by the size of the neural networks that the proof system backend can efficiently handle, making them inapplicable to modern deep neural networks.

Additionally, the advancement of verifiable deep learning is hindered by non-arithmetic operations, notably activation functions such as ReLU. These functions, although prevalent in deep learning, are not intrinsically supported by zero-knowledge proof (ZKP) systems. Early studies in verifiable deep learning training have explored square activation and polynomial approximation as arithmetic alternatives to traditional activation functions \cite{veriml, unlearning, DBLP:journals/corr/abs-2011-05530}. Yet, these alternatives diverge from standard deep learning architectures, raising questions about the effectiveness of such models. In light of the present state of verifiable deep learning training, confronting the challenges posed by activation functions such as ReLU is crucial for practical applications.

Furthermore, the representation of neural networks and their training process as arithmetic circuits that can accommodate activation function management is still unclear. Notably, beyond the innate layered structure, the training process encompasses both forward and backward propagations across numerous training steps, amplifying the complexity of the system to be modelled. Thus, a modelling approach for the training process over the neural network that aligns with not only the tensor structures but also the layered architecture and the numerous training steps is pivotal to the formulation of an efficient ZKP scheme for deep learning training.

In response to these challenges, we introduce \emph{zkDL}, the first zero-knowledge proof for deep learning training. Through zkDL, model developers can provide assurances to regulatory bodies that the model has undergone proper training in accordance with the committed training data, consistent with the specified training logic, thereby addressing concerns about model legitimacy. Our principal contributions include:

\begin{itemize}
    \item We present \emph{zkReLU}, an efficient specialized zero-knowledge proof designed specifically for the \emph{exact} computation of the Rectified Linear Unit (ReLU) and its backpropagation. This is achieved without the need for polynomial approximations to manage non-arithmetic operations. The foundational structure of zkReLU also facilitates our innovative arithmetic circuit design to depict the entire training procedure.
    
    \item We introduce \emph{FAC4DNN}, a modelling scheme that represents the training process over deep neural networks as arithmetic circuits. FAC4DNN is acutely aware of the unique structures inherent in the entire training process, including both tensor-based and layer-based configurations, as well as the repeated execution of similar operations across multiple training steps. Astutely, FAC4DNN leverages the unavoidable alternations made in zkReLU, turning them into an advantage. This enables proofs for different layers and training steps to be aggregated, bypassing the traditional sequential processing as in the training process. As a result, there are both empirical and theoretical reductions in computational and communicational overheads when conducting the proof.
    
    \item In addition to pioneering zero-knowledge verifiability for real-world scale neural networks, we have implemented zkDL as the first zero-knowledge deep-learning framework using CUDA. Benefiting from the combined strengths of zkDL's design and implementation, we markedly advance toward practical zero-knowledge verifiable deep learning training for real-world industrial applications. Specifically, on an 8-layer network containing over 10 million parameters and a batch size of 64 using the CIFAR-10 dataset, we have confined the proof generation time to \emph{less than 1 second} per batch update.
\end{itemize}

\subsection{Overview of zkDL}

\begin{figure*}
    \centering
    \begin{subfigure}[t]{0.3\textwidth}
        \centering
        \includegraphics[page=3,width=\textwidth]{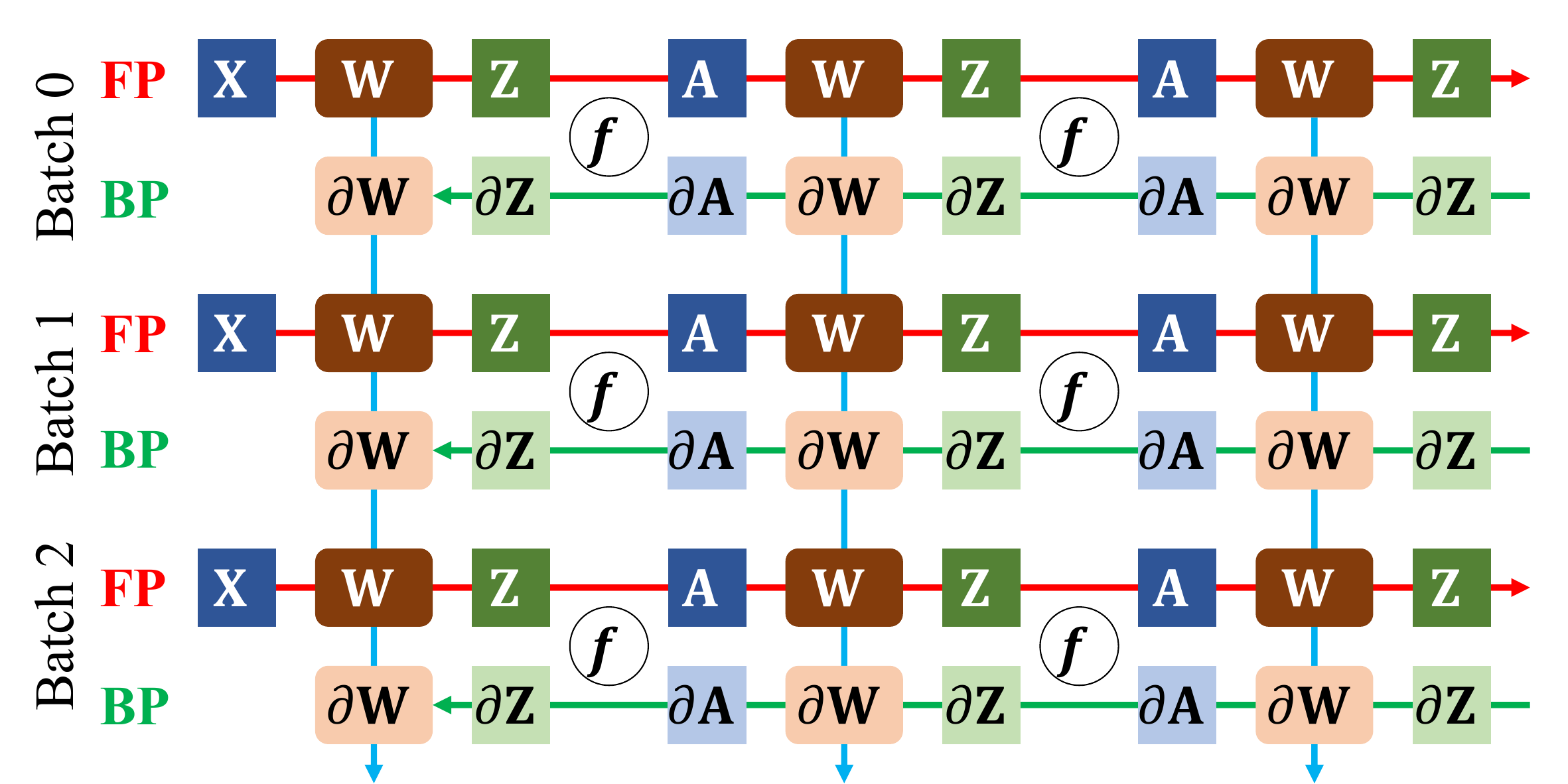}
        \caption{Prover executes training: Forward Propagation (FP, red arrows) \& Backward Propagation (BP, green arrows), and parameter updates (blue arrows).} %
        \label{fig:overview-train}
    \end{subfigure}
    \hfill
    \begin{subfigure}[t]{0.3\textwidth}
        \centering
        \includegraphics[page=4,width=\textwidth]{figures/overview-slides.pdf}
        \caption{Activation functions disrupt arithmetic relations (black lines), compromising proof integrity.}
        \label{fig:overview-disconnect}
    \end{subfigure}
    \hfill
    \begin{subfigure}[t]{0.3\textwidth}
        \centering
        \includegraphics[page=5,width=\textwidth]{figures/overview-slides.pdf}
        \caption{Prover uses auxiliary inputs (absolute values and signs) to reconnect tensors, restoring arithmetic relations. Note that the same pattern is repeated in each cell.}
        \label{fig:overview-reconnect}
    \end{subfigure}
    
    \begin{subfigure}[t]{0.3\textwidth}
        \centering
        \includegraphics[page=10,width=\textwidth]{figures/overview-slides.pdf}
        \caption{Due to the repeated pattern, the prover stacks tensors across layers and steps, commits to them (represented by the lock), and batches the arithmetic relations accordingly.} %\hy{What do black lines mean? \hy{What do the ``256 lock'' mean?}}\textbf{}
        \label{fig:overview-stack}
    \end{subfigure}
    \hfill
    \begin{subfigure}[t]{0.3\textwidth}
        \centering
        \includegraphics[page=11,width=\textwidth]{figures/overview-slides.pdf}
        \caption{Generating batched proof on arithmetic relations (highlighted in red) among stacked tensors.}
        \label{fig:overview-prove}
    \end{subfigure}
    \hfill
    \begin{subfigure}[t]{0.3\textwidth}
        \centering
        \includegraphics[page=12,width=\textwidth]{figures/overview-slides.pdf}
        \caption{Concluding with proof of evaluations (represented by unlocking) on stacked tensors.}
        \label{fig:overview-open}
    \end{subfigure}
    
    \caption{Overview of zkDL. One of our key contributions is the \textbf{compressed proof} across data batches, DL layers and training steps, by leveraging the repeated structures in deep learning training for acceleration.}
    \label{fig:overview}
\end{figure*}

In this section, we present an overview of zkDL, as depicted in Figure \ref{fig:overview}. zkDL, tailored as a Zero-Knowledge Proof (ZKP) for deep learning training, models both forward and backward propagations within neural networks (NNs) as arithmetic circuits (ACs). It adeptly manages non-arithmetic operations, which inherently resist direct proof. Furthermore, leveraging the inherent relatedness of tensor-based and layered structures across all training steps, zkDL effectively batches and compresses the proof. This strategic approach significantly diminishes the computational and communicational burdens for both the prover and verifier.

\subsubsection{Threat model} We assume the presence of two probabilistic polynomial-time (PPT) parties: a prover \(\mathcal{P}\) (e.g., an AI company) and a verifier \(\mathcal{V}\) (e.g., a government agency). Both parties concur on a predetermined neural network structure and training logic, including aspects such as the number of training steps, optimization algorithms, and learning rates. The prover's role is to strictly adhere to the training logic when training the model and then to demonstrate this adherence to the verifier. The verifier, while being semi-honest in adhering to the prescribed protocol for validation, remains interested in the training data and model parameters privately held by the prover.

\subsubsection{Haunt of non-arithmetic and zkReLU} \label{sec:overview-ac} As illustrated in Figure \ref{fig:overview-train}, training NNs primarily comprises two components: tensor operations within each layer and the forward and backward propagation through activation functions, exemplified by ReLU, situated between the layers. In the realm of ZKP, the intra-layer operations such as matrix multiplications, convolutions, and their backward propagations correspond to arithmetic operations. These operations are inherently composed of additions, subtractions, and multiplications, aligning naturally with ZKP schemes. In contrast, ReLU, akin to many other activation functions, is inherently non-arithmetic. This distinction necessitates dedicated ZKP protocols for its forward and backward propagations, ensuring they integrate coherently with the proof mechanisms for the training's arithmetic-centric segments.

As depicted in Figure \ref{fig:overview-disconnect}, due to the nature of ReLU and its backward propagation not establishing input-output relationships based on any arithmetic operation, the arithmetic connections between successive NN layers are absent. This absence necessitates that the dangling inputs and outputs related to ReLU's forward and backward passes—specifically, the preactivation $\mathbf{Z}$, the activation $\mathbf{A}$, and their respective gradients $\mathbf{G}_\mathbf{Z}$ and $\mathbf{G}_\mathbf{A}$, each of the same dimension $D$—be bound by commitments. They cannot merely be considered as intermediate computational values over AC. Additionally, it is imperative to reinstate this connection to hinder any potential deceitful actions by the prover during the ReLU's forward and backward computation, all while ensuring that the arithmetic operations within the layers are computed correctly and pass the verification process.

To reestablish this connection, the terms $\mathbf{Z}, \mathbf{G}_\mathbf{A}$ and $\mathbf{A}, \mathbf{G}_\mathbf{Z}$ can no longer be viewed as the ``input'' and ``output'' for forward and backward propagations. We decompose these values into an auxiliary input, represented as $\mathbf{aux}=(\mathbf{Z}', \mathbf{S_Z})$. Here, $\mathbf{Z}'$ denotes the absolute value of $\mathbf{Z}$, and $\mathbf{S_Z}$ represents the sign of $\mathbf{Z}$. The design of zkReLU should ensure that these auxiliary inputs can effectively reconstruct the four tensors, which are separated due to non-linearity. Given a randomness $r$ chosen by the verifier, Schwartz–Zippel lemma \cite{DBLP:journals/jacm/Schwartz80, DBLP:conf/eurosam/Zippel79} guarantees that the verification can be represented by: \begin{equation}
    \mathbf{Z} + \mathbf{Z}' + r\mathbf{A} + r^2\mathbf{G}_\mathbf{Z} = \left((2+r)\mathbf{Z}' +r^2 \mathbf{G}_\mathbf{A} \right)\odot \mathbf{S}_\mathbf{Z},
\end{equation} or equivalently, \begin{equation}
    \left(\mathbf{Z} - \mathbf{Z}' \odot (2\mathbf{S}_\mathbf{Z} - 1)\right) + \left(\mathbf{A} - \mathbf{Z}' \odot \mathbf{S}_\mathbf{Z}\right)r + \left(\mathbf{G}_\mathbf{Z} - \mathbf{G}_\mathbf{A} \odot \mathbf{S}_\mathbf{Z}\right)r^2 = 0. \label{eq:overview-zkrelu}
\end{equation} This ensures that the coefficient of each term is zero, and therefore the correctness of the ReLU operation.

However, while the incorporation of $\mathbf{aux}$ addresses the gap resulting from the ReLU non-linearity, the original logical orderings between the layers are not retained. Yet, this seemingly disadvantageous situation opens the door for a surprising optimization. Specifically, the tensor operations, including ReLU augmented with $\mathbf{aux}$, from various layers and training steps can be aggregated regardless of their logical orderings in training, which has already been disrupted by zkReLU.

\subsubsection{FAC4DNN: Aggregating and compressing proofs of deep learning training}

In Figure \ref{fig:overview-reconnect}, a consistent pattern of tensors—including auxiliary inputs—and their corresponding arithmetic relations is evident, spanning multiple layers and training steps. During the training phases, the prover must adhere to the designated sequence of layers and training steps to finalize the computations intrinsic to the training process. However, as the disconnection of the arithmetic circuit and the commitment to the auxiliary inputs fundamentally rewire the arithmetic circuit, tensors in \textbf{different NN layers and training steps} reside in the \textbf{same AC layer} for the proof generation. Therefore, the prover can collectively group the proofs across these repetitive patterns.

As illustrated in Figure \ref{fig:overview-stack}, throughout the proof generation phase, the prover compiles similar tensors from all repetitive units and simultaneously amalgamates their arithmetic relationships. To delve deeper, consider \(N\) instances of matrix multiplication, a common tensor operation in deep learning, such as \(\mathbf{Z}^{(n)}\gets \mathbf{X}^{(n)} \mathbf{Y}^{(n)}\,(0\leq n \leq N-1)\). These instances span several layers and training steps that require verification. Each matrix is defined as \(\mathbf{Z}^{(n)}\in \F^{D_1\times D_3}, \mathbf{X}^{(n)}\in \F^{D_1\times D_2}\), and \(\mathbf{Y}^{(n)}\in \F^{D_2\times D_3}\). Instead of verifying all \(N\) instances individually, the proof directly addresses the stacked matrices \(\mathbf{X}\in \F^{N \times D_1\times D_2}, \mathbf{Y}\in \F^{N \times D_2\times D_3}, \mathbf{Z}\in \F^{N \times D_1\times D_3}\), which are in the form of 3-order tensors. Using randomness designated by the verifier, namely \(\mathbf{w}\sim \F^{\log_2 N}, \mathbf{u}_1\sim \F^{\log_2 D_1}, \mathbf{u}_3\sim \F^{\log_2 D_3}\), the aggregated proof is represented as: 
\begin{equation} \label{eq:overview-sumcheck-aggr}
    \me{\mathbf{Z}}{\mathbf{w}, \mathbf{u}_1, \mathbf{u}_3} = \sum_{\mathbf{i}=0}^{N}\me{\beta}{\mathbf{w}, \mathbf{i}}\sum_{\mathbf{j}=0}^{D_2} \me{\mathbf{X}}{\mathbf{i}, \mathbf{u}_1, \mathbf{j}} \me{\mathbf{Y}}{\mathbf{i}, \mathbf{j}, \mathbf{u}_3},
\end{equation}
where \(\me{\mathbf{Z}}{\cdot}, \me{\mathbf{X}}{\cdot}, \me{\mathbf{Y}}{\cdot}, \me{\beta}{\cdot}\) are multilinear extensions \cite{me} of \(\mathbf{Z}, \mathbf{X}, \mathbf{Y}\) (viewed as a function mapping from binary representations of indices to values of the corresponding dimensions) and \(\beta: \{0, 1\}^{\log_2N}\times \{0, 1\}^{\log_2N} \to \{0, 1\}\) where \(\beta(\mathbf{b}_1, \mathbf{b}_2) = \1\left\{\mathbf{b}_1 = \mathbf{b}_2\right\}\). %This function indicates whether two indices \(\mathbf{b}_1, \mathbf{b}_2\), represented in their binary formats, are equal.

Notably, executing the sumcheck \eqref{eq:overview-sumcheck-aggr} over $\mathbf{i}$, which pertains to the extra dimension from stacking, compresses the proof of \eqref{eq:overview-sumcheck-aggr} to: \begin{equation}
    \alpha = \me{\beta}{\mathbf{w}, \mathbf{v}}\sum_{\mathbf{j}=0}^{D_2} \me{\mathbf{X}}{\mathbf{v}, \mathbf{u}_1, \mathbf{j}} \me{\mathbf{Y}}{\mathbf{v}, \mathbf{j}, \mathbf{u}_3}.
\end{equation} In this context, the prover's claim $\alpha$ and randomness $\mathbf{v}$ arise from the compression's execution. Therefore, validating: \begin{equation}
    \alpha {\me{\beta}{\mathbf{w}, \mathbf{v}}}^{-1}= \sum_{\mathbf{j}=0}^{D_2} \me{\mathbf{X}}{\mathbf{v}, \mathbf{u}_1, \mathbf{j}} \me{\mathbf{Y}}{\mathbf{v}, \mathbf{j}, \mathbf{u}_3}
\end{equation} revives the sumcheck for singular matrix multiplication.

This technique of aggregation and compression extends to all tensor operations that can be verified using the sumcheck protocol individually, regardless of the number of inputs. Specifically, all $N$ instances of each input are condensed into one, contingent upon the randomness of $\mathbf{v}$. This ensures a seamless transition to the sumcheck for singular operations. As a result, the original logical ordering of the $N$ operational instances, whether they belong to consecutive layers or training phases, becomes inconsequential. From an arithmetic circuit (AC) perspective, this method effectively ``flattens'' the circuit, reducing its depth by a magnitude of $O(N)$. However, this comes at the cost of expanding the width of each AC layer by the same magnitude.

Building on zkReLU, it is inescapable that tensor commitments occur at every layer and training step, exemplified by each repetitive unit in Figure \ref{fig:overview-reconnect}. Hence, there is no asymptotic overhead in the cumulative tensor size bound by the commitments, nor in the requisite commitment time. Conversely, assuming the adoption of sub-linear size commitments, such as Hyrax \cite{hyrax} in this study (whose commitment sizes grow as fast as the square root of the overall committed value size), committing to a stack of $N$ tensors could culminate in an asymptotic reduction in the overall commitment size by an order of $O\left(\sqrt{N}\right)$. 

Similarly, due to the flattened design of the arithmetic circuit facilitated by FAC4DNN, there are significant reductions in both proof sizes and verification times. By aggregating $N$ instances of the same operation, the proof size does not increase at a $\Theta\left(N\right)$ rate. Only a single copy of the proof for a singular tensor operation is needed, instead of $N$ copies, and the compression steps introduce only an $O(\log N)$ overhead. Additionally, when summing up the components of zkDL, the proof verification depicted in Figure \ref{fig:overview-open} also witnesses a reduction by an order of $O\left(\sqrt{N}\right)$. Within zkDL, the batched tensors seamlessly integrate into the parallel computing environment of deep learning training, further reducing the total proof time, especially since there is no need to strictly follow the original computation sequence throughout training.

\section{Preliminaries}

\subsection{Notations}

In this study, vectors and tensors are denoted using boldface notation, such as \( \mathbf{v} \) and \( \mathbf{S} \). To ensure compatibility with the cryptographic tools employed, we adopt a 0-indexing scheme for these vectors and tensors. Our indexing approach for multi-dimensional tensors aligns with the PyTorch convention, exemplified by \( \mathbf{S}_{[i, j_0:j_1, :]} \). Additionally, for any positive integer \( N \), we employ the shorthand notation \( [N] \) to represent the set \( \{0, 1, \dots, N-1\} \).

\subsection{Sumcheck and GKR protocols} \label{sec:sumcheck}

The sumcheck protocol \cite{sumcheck, me} serves as a fundamental component in modern proof systems, allowing for the verification of the correctness of the summation $\sum_{\mathbf{b} \in \{0, 1\}^d} f(\mathbf{b})$ for a $d$-variate polynomial $f$. This protocol offers an efficient proving time of $O(2^d)$ and a compact proof size of $O(d)$.

Building upon the sumcheck protocol, the GKR protocol \cite{gkr} provides an interactive proof for the correct computation of arithmetic circuits. It leverages the sumcheck protocol between the layers of the arithmetic circuit, as well as the Pedersen commitments to the private inputs. 

The sumcheck and GKR protocols have found wide applications in verifying the proper execution of deep learning models, thanks to their compatibility with tensor structures. In particular, the tensor operations can often be expressed in the form of sumchecks, via the \emph{multilinear extensions} of the tensors: for each tensor $\mathbf{S} \in \F^D$ that is discretized from real numbers (without loss of generality, assume $D$ is a power of 2, or zero-padding may be applied), its multilinear extension $\me{\mathbf{S}}{\cdot}: \F^{\log_2D}\to \F$ is a multivariate polynomial defined as
\begin{equation}
    \label{eq:multilinear-extension}
    \me{\mathbf{S}}{\mathbf{u}} = \sum_{\mathbf{b}\in \{0, 1\}^{\log_2D}}\mathbf{S}(\mathbf{b})\me{\beta}{\mathbf{u}, \mathbf{b}},
\end{equation}
where $\mathbf{b}$ represents the $\mathbf{b}$-th element of $\mathbf{S}$ (identifying the index by the binary string), and $\me{\beta}{\cdot, \cdot}: \F^{\log_2D}\times \F^{\log_2D}\to \F$ is a polynomial. When restricted to ${\{0, 1\}}^{\log_2D}\times {\{0, 1\}}^{\log_2D}$, $\me{\beta}{\mathbf{b}_1, \mathbf{b}_2} = \case{1, & \text{if }\mathbf{b}_1 = \mathbf{b}_2; \\ 0, & \text{if }\mathbf{b}_1 \neq \mathbf{b}_2,}$ for $\mathbf{b}_1, \mathbf{b}_2 \in {\{0, 1\}}^{\log_2D}$. In the context of multilinear extensions, we use the notation of indices and their binary representations interchangeably. Moreover, we use $\me{S}{\mathbf{u}_1, \mathbf{u}_2, \dots, \mathbf{u}_k}$ to denote the evaluation of $\me{S}{\cdot}$ at the concatenation of multiple vectors $\mathbf{u}_1, \mathbf{u}_2, \dots, \mathbf{u}_k$ whose dimensions sum up to $\log_2D$. 

Specialized adaptations of the sumcheck protocols cater to prevalent deep learning operations like matrix multiplication \cite{DBLP:conf/crypto/Thaler13, safetynets} and convolution \cite{zkcnn}. When the sumcheck protocol is applied to these operations, it yields the proclaimed evaluation \(\me{\mathbf{S}}{\mathbf{v}}\) of the multilinear extension for each involved tensor \(\mathbf{S}\), with \(\mathbf{v}\) being selected due to the inherent randomness of the protocol. Subsequent verification of these claimed evaluations employs the proof of evaluations relative to the commitment of \(\mathbf{S}\), as detailed in \ref{sec:pedersen}. Additionally, zero-knowledge variants of the sumcheck protocol \cite{DBLP:journals/eccc/ChiesaFS17, libra, orion} have been developed with asymptotically negligible overhead, which leaks no information on the tensors involved once employed.

Historically, representing NNs as ACs has posed challenges due to the inclusion of non-arithmetic operations, making it ambiguous and inefficient. In this study, we tackle this problem by refining and enhancing the modelling approach. Rather than directly representing NNs as ACs with identical layer structures and invoking the GKR protocol, we adeptly aggregate the sumcheck protocols for tensor operations over different layers and training steps, grounded in our novel AC design of FAC4DNN. This leads to improved running times and more concise proof sizes.

\subsection{Pedersen commitments} \label{sec:pedersen}

The Pedersen commitment is a zero-knowledge commitment scheme that relies on the hardness of the discrete log problem (DLP). Specifically, in a finite field $\F$ with prime order $p$, committing to $d$-dimensional vectors requires an order-$p$ cyclic group $\G$ (e.g., an elliptic curve) and uniformly independently sampled values $\mathbf{g} = (g_0, g_1, \dots, g_{d-1})^\top \sim \G^d$, and $h \sim \G$. This scheme allows any $d$-dimensional tensor $\mathbf{S} = (S_0, S_1, \dots, S_{d-1})^\top \in \F^d$ to be committed as: \[\com{\mathbf{S}}\gets \tt{Commit}(\mathbf{S}; r) = h^r\mathbf{g}^\mathbf{S} = h^r\prod_{i=0}^{d-1}g_i^{S_i},\] where $r\sim \F$ is uniformly sampled, ensuring zero-knowledgeness of the committed value $\mathbf{S}$.  A complete and sound \emph{proof of evaluation} can be conducted for $\me{\mathbf{S}}{\mathbf{v}}$ with respect to $\com{\mathbf{S}}$ for any randomness $\mathbf{v}$ chosen by the verifier. This can be utilized as a component of the proofs for operations on private tensors.

Additionally, the Pedersen commitment scheme exhibits homomorphic properties. Specifically, for two commitments, $\com{\mathbf{S}_1} = h^{r_1}\mathbf{g}^{\mathbf{S}_1}$ and $\com{\mathbf{S}_2} = h^{r_2}\mathbf{g}^{\mathbf{S}_2}$, corresponding to tensors $\mathbf{S}_1$ and $\mathbf{S}_2$, their multiplied result, $\com{\mathbf{S}_1}\cdot \com{\mathbf{S}_2} = h^{r_1 + r_2}\mathbf{g}^{\mathbf{S}_1 + \mathbf{S}_2}$, is a valid commitment to the sum, $\mathbf{S}_1 + \mathbf{S}_2$.

Incorporating the Pedersen commitment leads to a \(O(d)\) runtime, both for committing to a tensor and for conducting a proof of evaluation from the prover's side. In real-world applications, several variations of the Pedersen commitment are utilized to enhance verifier efficiency and curtail communication demands. For example, Hyrax \cite{hyrax} is a commitment scheme that does not require a trusted setup, refining the commitment size, proof of evaluation size, and the time it takes for a verifier to evaluate the proof to $O(\sqrt{d}), O(\log d)$, and $O(\sqrt{d})$, respectively. These advancements are strategically integrated into the blueprint of zkDL, particularly FAC4DNN, aiming to improve the prover time, proof sizes, and verifier times in the realm of deep learning.

\subsection{Security assumptions}
We assume that the commitment scheme employed in our research offers \(\lambda\)-bit security. In line with this, the finite field \(\F\), wherein all computations are discretized, is of size \(\Omega(2^{2\lambda})\). Furthermore, we posit that all aspects of the training procedure, encompassing the number of training steps, the number of layers, tensor dimensions, and the complexity of operations between them, are all polynomial in \(\lambda\).

\section{zkReLU: Proof of forward and backward propagations through ReLU activation}\label{sec:zkrelu}

The proper and tailored handling of non-linearities, especially ReLU, is essential to achieve efficient zero-knowledge verifiable training on deep neural networks. In this section, we introduce zkReLU, a zero-knowledge protocol designed specifically to verify the training of deep neural networks that incorporate ReLU non-linearity. Our scheme employs auxiliary inputs, allowing for the verification of both the forward and backward propagations involving ReLU. Furthermore, zkReLU integrates the ReLU function into the FAC4DNN framework, which is primarily concerned with the arithmetic operations between tensors. This integration ensures that the efficiencies brought about by FAC4DNN extend to the proof of the entire training process.

When the ReLU activation function is applied to the output of layer \( \ell \) (with \( 1 \leq \ell \leq L-1 \) and \( L \) representing the total number of layers), denoted as \( \mathbf{Z}^{(\ell)} \), it pertains to a linear layer, either fully connected or convolutional. Given that multiplication operations play a role in computing \( \mathbf{Z}^{(\ell)} \), \( \mathbf{Z}^{(\ell)} \) undergoes scaling twice by the scaling factor, assumed to be a power of 2, specifically \( 2^R \). Consequently, when dealing with quantized values, the ReLU operation must also reduce the input by a factor of \( 2^R \). This mechanism can be articulated as the activation function \( \mathbf{A}^{(\ell)} = \text{ReLU}\left(\left\lfloor\frac{\mathbf{Z}^{(\ell)}}{2^R}\right\rceil\right) = \1\left\{\left\lfloor\frac{\mathbf{Z}^{(\ell)}}{2^R}\right\rceil \geq 0\right\}\odot \left\lfloor\frac{\mathbf{Z}^{(\ell)}}{2^R}\right\rceil \).

To simplify the notation, we introduce the rescaled \({\mathbf{Z}^{(\ell)}}' := \left\lfloor\frac{\mathbf{Z}^{(\ell)}}{2^R}\right\rceil\). This representation allows for the expression of \(\mathbf{Z}^{(\ell)}\) as \(\mathbf{Z}^{(\ell)} = 2^R {\mathbf{Z}^{(\ell)}}' + \mathbf{R}_\mathbf{Z}^{(\ell)}\), where \(\mathbf{R}_\mathbf{Z}^{(\ell)}\) denotes the remainder resulting from rounding. To adequately define the concept of "non-negative" within the finite field, it becomes necessary to restrict the scale of \({\mathbf{Z}^{(\ell)}}'\). We assume each element of \({\mathbf{Z}^{(\ell)}}'\) is an \(Q\)-bit signed integer, with \(2^Q \ll |\mathbb{F}|\). Solely for analytical reasons, we decompose \({\mathbf{Z}^{(\ell)}}'\) into its magnitude bits and sign bits, such that \({\mathbf{Z}^{(\ell)}}' = \sum_{j=0}^{Q-2}2^j\mathbf{B}_j^{(\ell)} - 2^{Q-1}\mathbf{B}_{Q-1}^{(\ell)}\), with each \(\mathbf{B}_j^{(\ell)}\) for \(0\leq j \leq Q-1\) being binary. Furthermore, \(\mathbf{B}_{Q-1}^{(\ell)}\) represents the negativity of each dimension in \(\mathbf{Z}^{(\ell)}\) (assigning 1 for negative values and 0 otherwise). The arithmetic relations between these intermediate values and the input and outputs of the forward propagation, notably \(\mathbf{A}^{(\ell)}\) and \(\mathbf{Z}^{(\ell)}\), can be captured as: \begin{align}
    \label{eq:zkrelu-A} \mathbf{A}^{(\ell)} &= (\mathbf{1}-\mathbf{B}_{Q-1}^{(\ell)}) \odot {\mathbf{Z}^{(\ell)}}',\\
    \label{eq:zkrelu-Z} \mathbf{Z}^{(\ell)} &= 2^R {\mathbf{Z}^{(\ell)}}' + {\mathbf{R}_\mathbf{Z}^{(\ell)}}.
\end{align}

During the backpropagation phase, the gradient of \(\mathbf{A}^{(\ell)}\), denoted as \(\mathbf{G}_\mathbf{A}^{(\ell)}\), is typically scaled twice by \(2^R\) owing to the multiplication operations involved. As such, it becomes necessary for the prover to rescale this gradient to \({\mathbf{G}_\mathbf{A}^{(\ell)}}' := \left\lfloor\frac{\mathbf{G}_\mathbf{A}^{(\ell)}}{2^R}\right\rceil\), with the resulting remainder being \(\mathbf{R}_{\mathbf{G}_\mathbf{A}}^{(\ell)}\). Subsequently, \({\mathbf{G}_\mathbf{A}^{(\ell)}}'\) is employed to compute the gradient of \(\mathbf{Z}^{(\ell)}\), represented as \(\mathbf{G}_\mathbf{Z}^{(\ell)}\), through the Hadamard product \(\odot\) with \(\mathbf{1}-\mathbf{B}_{Q-1}^{(\ell)}\). Analogous to the forward propagation, the correctness of the backward propagation can be outlined through the following arithmetic relations: \begin{align}
    \label{eq:zkrelu-GZ} \mathbf{G}_\mathbf{Z}^{(\ell)} &= (\mathbf{1}-\mathbf{B}_{Q-1}^{(\ell)}) \odot {\mathbf{G}_\mathbf{A}^{(\ell)}}',\\
    \label{eq:zkrelu-GA} \mathbf{G}_\mathbf{A}^{(\ell)} &= 2^R {\mathbf{G}_\mathbf{A}^{(\ell)}}' + \mathbf{R}_\mathbf{\mathbf{G}_\mathbf{A}}^{(\ell)}.
\end{align}

%For simplicity, we omit the identifier $\ell$ when analyzing tensors within the same layer $\ell$. 
It is also important to observe that for the intermediate variables involved in \eqref{eq:zkrelu-A}, \eqref{eq:zkrelu-Z}, \eqref{eq:zkrelu-GZ} and \eqref{eq:zkrelu-GA}, the values of the tensors each tensor need to be constrained so as to prevent malicious manipulations by the prover. Namely, ${\mathbf{Z}^{(\ell)}}'\in \left[-2^{Q-1}, 2^{Q-1}\right)^{D^{(\ell)}}$, $\mathbf{B}_{Q-1}^{(\ell)}\in \{0, 1\}^{D^{(\ell)}}$, $\mathbf{R}_\mathbf{Z}^{(\ell)}\in \left[-2^{R-1}, 2^{R-1}\right)^{D^{(\ell)}}$, ${\mathbf{G}_\mathbf{A}^{(\ell)}}' \in \left[-2^{Q-1}, 2^{Q-1}\right)^{D^{(\ell)}}$, and $\mathbf{R}_{\mathbf{G}_\mathbf{A}}^{(\ell)}\in \left[-2^{R-1}, 2^{R-1}\right)^{D^{(\ell)}}$, are bounded and share the same dimension $D^{(\ell)}$. However, in compatibility with our design of FAC4DNN, which is overviewed in Section \ref{sec:overview-ac} and detailed in \ref{sec:ac}, the proof is aggregated over multiple training steps and layers. Therefore, in the following discussion, we use the notations without the superscripts (i.e., $\mathbf{A}, \mathbf{Z}, \mathbf{G_Z}, \mathbf{G_A}$ for the input and outputs of both directions of ReLU, and $\mathbf{Z}', \mathbf{B}_{Q-1}, \mathbf{R_Z}, \mathbf{G}_\mathbf{A}', \mathbf{R_{G_A}}$ as the intermediate values introduced) to represent the stacked tensors over multiple layers, all of which share the same dimension denoted as $D$. 

\subsection{Formulation of zkDL} \label{sec:zkdl-formulation}

As an initial step, we presuppose that the proof's execution over other components yields the claimed evaluations of the multilinear extensions on \(\mathbf{A}, \mathbf{Z}, \mathbf{G_Z}, \mathbf{G_A}\). This is achieved through the aggregated sumchecks on the other operations in which these tensors participate, as will be elaborated in Section \ref{sec:ac}. These are represented as \(\me{\mathbf{A}}{\mathbf{u_A}}, \me{\mathbf{Z}}{\mathbf{u_Z}}, \me{\mathbf{G_Z}}{\mathbf{u_{G_Z}}}, \me{\mathbf{G_A}}{\mathbf{u_{G_A}}}\). Thus, by committing to the intermediate values \(\mathbf{Z}', \mathbf{B}_{Q-1}, \mathbf{R_Z}, \mathbf{G}_\mathbf{A}', \mathbf{R_{G_A}}\) and executing the sumcheck protocol on the aggregated versions of  \eqref{eq:zkrelu-A}, \eqref{eq:zkrelu-Z}, \eqref{eq:zkrelu-GZ}, \eqref{eq:zkrelu-GA}, the validity of these four equations can be verified with overwhelming probability.

Nevertheless, the validity criteria for the intermediate values also warrant consideration. To ensure complete compatibility with FAC4DNN, which functions over aggregated tensor structures, these intermediate values are collectively represented as a 3D binary tensor— the auxiliary input \(\mathbf{aux} \in \{0, 1\}^{2\times D\times (Q + R)}\). Here, \(\mathbf{aux}_{[0, :, :]}\) and \(\mathbf{aux}_{[1, :, :]}\) stand for binary representations of the \((Q+R)\)-bit integers in \(\mathbf{Z}\) and \(\mathbf{G_A}\) respectively, given by: \begin{align}
    \mathbf{aux}_{[0, :, :]}\mathbf{s}_{Q+R} &= \mathbf{Z}\label{eq:aux-Z},\\
    \mathbf{aux}_{[1, :, :]}\mathbf{s}_{Q+R} &= \mathbf{G_A}.\label{eq:aux-GA}
\end{align}
Here, \(\mathbf{s}_B = (1, 2, 2^2, \dots, 2^{B-2}, -2^{B-1})^\top\) facilitates the recovery of a \(B\)-bit integer from its binary representations.

Using this configuration, the intermediate variables can be equated as: \begin{align}
    \mathbf{aux}_{[0, :, R:Q+R]}\mathbf{s}_{Q} + \mathbf{aux}_{[0, :, R-1]} &= \mathbf{Z}',\\
    \mathbf{aux}_{[1, :, R:Q+R]}\mathbf{s}_{Q} + \mathbf{aux}_{[1, :, R-1]} &= \mathbf{G}_\mathbf{A}',\\
    \mathbf{aux}_{[0, :, 0:R]}\mathbf{s}_{R} &= \mathbf{R_Z},\\
    \mathbf{aux}_{[1, :, 0:R]}\mathbf{s}_{R} &= \mathbf{R_{G_A}},\\
    \mathbf{aux}_{[0, :, Q+R-1]} &= \mathbf{B}_{Q-1}.
\end{align} Ensuring \eqref{eq:zkrelu-Z} and \eqref{eq:zkrelu-GA} is upheld, while \eqref{eq:zkrelu-A} and \eqref{eq:zkrelu-GZ} can be reframed as: \begin{gather}
    \mathbf{A} = (\mathbf{1} - \mathbf{aux}_{[0, :, Q+R-1]}) \odot \left(\mathbf{aux}_{[0, :, R:Q+R]}\mathbf{s}_{Q} + \mathbf{aux}_{[0, :, R-1]}\right), \label{eq:aux-A}\\
    \mathbf{G_Z} = (\mathbf{1} - \mathbf{aux}_{[0, :, Q+R-1]}) \odot \left(\mathbf{aux}_{[1, :, R:Q+R]}\mathbf{s}_{Q} + \mathbf{aux}_{[1, :, R-1]}\right). \label{eq:aux-GZ}
\end{gather} Furthermore, to ensure that \(\mathbf{aux}\) is genuinely binary, the \emph{auxiliary input validity proof (AIVP)} must be conducted on \begin{equation}
    \mathbf{aux} \odot \left(\mathbf{aux} - \mathbf{1}\right) = \mathbf{0}. \label{eq:aux-bin}
\end{equation} Implementing the sumcheck protocol on \eqref{eq:aux-Z}, \eqref{eq:aux-GA}, \eqref{eq:aux-A}, \eqref{eq:aux-GZ}, and \eqref{eq:aux-bin} is sufficient to ascertain the correctness of the computation for ReLU with respect to \(\mathbf{Z}, \mathbf{G_A}, \mathbf{A}\), and \(\mathbf{G_Z}\). Practically, zkReLU combines the sumcheck protocols for these equations to further diminish the running times and size of the proof, chiefly by confining the relatively costly proof of evaluation on \(\mathbf{aux}\) to a single instance. The full details of zkReLU, including its optimized sumcheck protocol, are given in Appendix \ref{appendix:zkrelu}.

\section{FAC4DNN: An alternative arithmetic circuit design for modelling neural networks} \label{sec:ac}

\begin{figure*}
    \centering
    \includegraphics[width=0.9\linewidth]{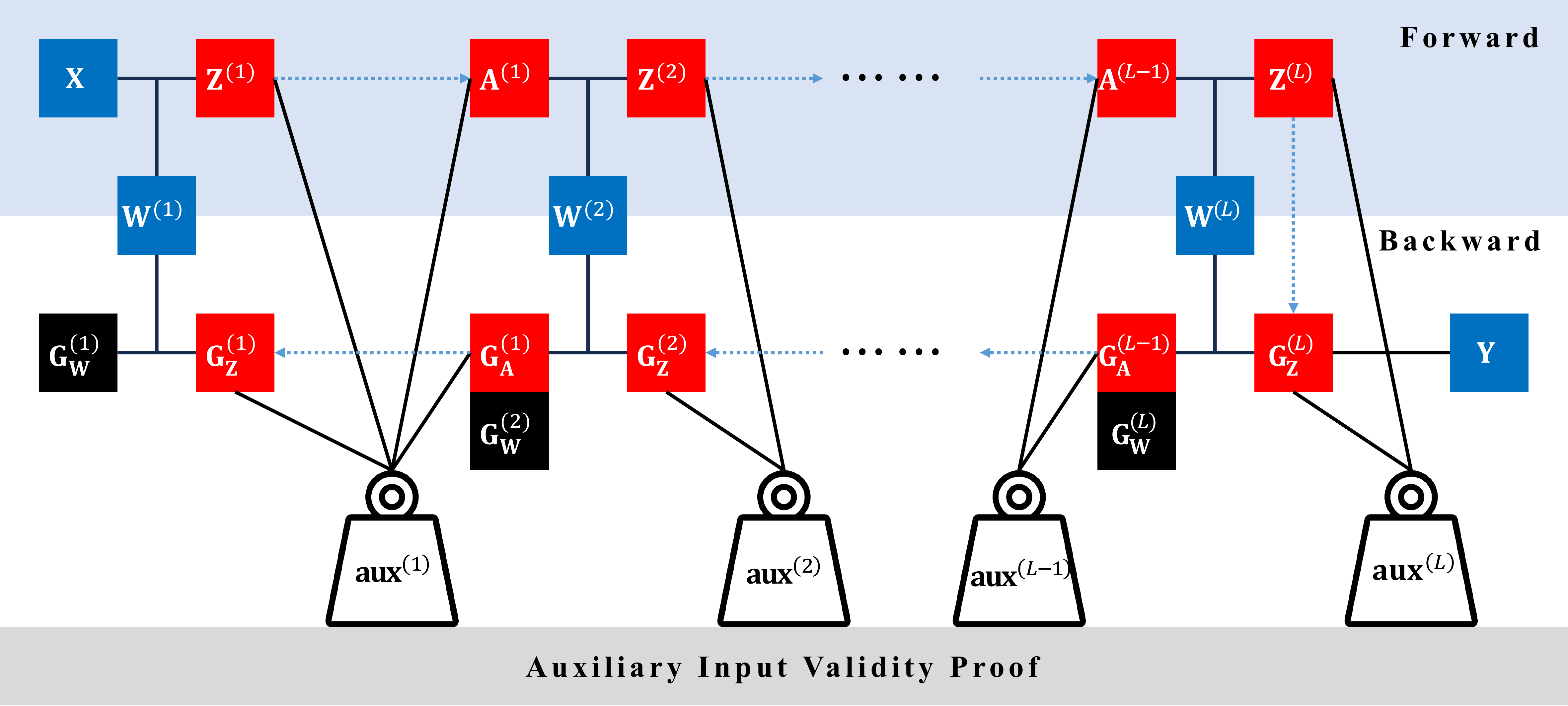}
    \caption{The configuration of FAC4DNN for each training step: The entire circuit is anchored by the auxiliary inputs $\mathbf{aux}^{(\ell)}$ through arithmetic relations represented by the black lines, which replace the non-arithmetic operations depicted by the blue dash arrows. These non-arithmetic operations include the ``comparison-with-0'' operation in ReLU and its gradient. The gradients of the model parameters are highlighted within black bounding boxes. Both the data and model parameters are bound by the Pederson commitments and are delineated within blue bounding boxes. The tensors pertinent to zkReLU are bound by the commitment of $\mathbf{aux}$ and are enclosed within red bounding boxes.}
    \label{fig:circuit}
\end{figure*}

In Section \ref{sec:zkrelu}, we operate under the assumption that the sumcheck protocols are executed on the arithmetic components of each neural network layer using the same randomness. At first glance, this might suggest potential security vulnerabilities. In the neural network structure, the output from one layer serves as the input to the subsequent layer; hence, employing identical randomness across two such layers might seemingly compromise security. However, we introduce an alternative framework, FAC4DNN, which stands for a \textbf{f}lat \textbf{a}rithmetic \textbf{c}ircuit \textbf{for} a \textbf{d}eep \textbf{n}eural \textbf{n}etwork. The core insight of FAC4DNN is the inherent separation between consecutive layers brought about by non-arithmetic operations such as ReLUs and their backpropagation. Given that tensor data on either side of this separation is secured through commitments, FAC4DNN is not constrained to maintain the traditional sequence inherent to neural networks. Instead, it can aggregate proofs across various layers and training steps, leading to notable optimizations in both the runtime and the proof sizes.

Consider the process of a single training step on a neural network (NN) using ReLU activations, which entails both a forward and backward pass. As highlighted in Figure \ref{fig:circuit}, a break in the arithmetic relations arises at each ReLU activation and its associated backward propagation. Absent the reintegration provided by zkReLU using auxiliary inputs $\mathbf{aux}^{(\ell)}$, the arithmetic operations corresponding to each NN layer $\ell$---namely the layer's input $\mathbf{A}^{(\ell-1)}$ (with $\mathbf{A}^{(0)}$ defined as the input data $\mathbf{X}$), the model parameters $\mathbf{W}^{(\ell)}$, the pre-activation $\mathbf{Z}^{(\ell)}$, and their respective gradients $\mathbf{G}_\mathbf{A}^{(\ell-1)}$, $\mathbf{G}_\mathbf{W}^{(\ell)}$, and $\mathbf{G}_\mathbf{Z}^{(\ell)}$---can be validated separately from operations in other NN layers using the sumcheck protocol.

For each training step, we operate under the assumption that both $\mathbf{W}^{(\ell)}$ and $\mathbf{G}_\mathbf{W}^{(\ell)}$ are bound by commitments. Therefore, the declared multilinear extension values for these two tensors can be directly verified through the proof of evaluation. On the other hand, the declared multilinear extension values of the remaining four tensors can initiate the proof of the correctness of ReLU and its backpropagation via zkReLU, as described in Section \ref{sec:zkdl-formulation}. Expanding our view to encompass all training steps, the proofs validating parameter updates across varying layers become independent, not only across different layers but also throughout distinct training steps. This independence is upheld by commitments that bind both old and new model parameters and their gradients. Similarly, the proofs verifying arithmetic operations and zkReLU maintain their independence across individual training steps.

The crucial insight regarding FAC4DNN is that the traditionally sequential order of the layers and training steps has been parallelized, leading to a reduction in the circuit depth by a factor of \(O(N)\), with \(N\) representing the product of the neural network depth and the number of training steps. Notably, this departure from the original sequential order allows the prover to execute the sumcheck protocols for all \(N\) parallel components of the circuit using identical randomness, without any interference between components. This concept underpins the design of the aggregated proofs over FAC4DNN.

% Specifically, as illustrated in Figure \ref{fig:circuit}, %\hc{TODO:}
\paragraph{Aggregating the proof} With the non-arithmetic operations transformed into auxiliary inputs and re-established as arithmetic operations, the validity of the training process becomes synonymous with the correctness of all arithmetic tensor operations across all layers. Given that tensor operations of similar sizes commonly exist in different layers and are repeated across multiple steps, proofs of these operations can be batched across both layers and steps.

Consider $N$ instances of a tensor operation that we want to aggregate, such that each instance (indexed $0\leq n \leq N-1$) with $K$ input tensors $\mathbf{X}^{(n)}_{k}$ for $0\leq k \leq K-1$, and one output tensor $\mathbf{Y}^{(n)}$. For each of these $K+1$ types of tensors, the $N$ instances of it are of the same dimensionality. By stacking the $N$ tensors of each type together, we get $\mathbf{X}_0, \mathbf{X}_1, \dots, \mathbf{X}_{K-1}$ and $\mathbf{Y}$, where using the notations of multilinear extensions, each $\me{\mathbf{X}_k}{n, \cdot}$ and $\me{\mathbf{Y}}{n, \cdot}$ can be equated with $\mathbf{X}^{(n)}_k$ and $\mathbf{Y}^{(n)}$, correspondingly. Consider any tensor operations that can be expressed as \begin{multline}\label{eq:tensor-op}
    \me{\mathbf{Y}}{n, \mathbf{i}} = \sum_{\mathbf{j}=0}^{D_\text{in}-1}f\left(\me{\mathbf{X}_0}{n, \mathbf{i}_{I_0}, \mathbf{j}_{J_0}}, \me{\mathbf{X}_1}{n, \mathbf{i}_{I_1}, \mathbf{j}_{J_1}},\right.\\ \left.\dots, \me{\mathbf{X}_{K-1}}{n, \mathbf{i}_{I_{K-1}}, \mathbf{j}_{J_{K-1}}}\right),
\end{multline} where $0\leq \mathbf{i} \leq D_\text{out} - 1, 0\leq \mathbf{j} \leq D_\text{in} - 1$ are indices expressed in binary format, $f$ is a known multivariate polynomial, and $I_k\subseteq \left[\ceil{\log_2 D_\text{out}}\right]$ and $J_k \subseteq \left[\ceil{\log_2 D_\text{in}}\right]$ are subsets of the indices defined by the nature of the operation. For simplicity, we abbreviate each summand on the right-hand side of \eqref{eq:tensor-op} as $f\left(\me{\mathbf{X}_k}{n, \mathbf{i}_{I_k}, \mathbf{j}_{J_k}}\right)_{k=0}^{K-1}$.  The sumcheck protocol for \eqref{eq:tensor-op} can be formulated as \begin{gather} 
    0 = \sum_{\mathbf{i}=0}^{D_\text{out}-1}\me{\beta}{\mathbf{u}, \mathbf{i}}\left(\me{\mathbf{Y}}{\mathbf{n}, \mathbf{i}} - \sum_{\mathbf{j}=0}^{D_\text{in}-1}f\left(\me{\mathbf{X}_k}{n, \mathbf{i}_{I_k}, \mathbf{j}_{J_k}}\right)_{k=0}^{K-1}\right)\\
    = \sum_{\mathbf{i}=0}^{D_\text{out}-1}\sum_{\mathbf{j}=0}^{D_\text{in}-1} \me{\beta}{\mathbf{u}, \mathbf{i}}\left(D_\text{in}^{-1}\me{\mathbf{Y}}{n, \mathbf{i}} - f\left(\me{\mathbf{X}_k}{n, \mathbf{i}_{I_k}, \mathbf{j}_{J_k}}\right)_{k=0}^{K-1}\right),\label{eq:tensor-op-layer}
\end{gather} for $\mathbf{u}\sim\F^{\ceil{\log_2 D_\text{out}}}$ uniformly randomly chosen by the verifier. Writing $n$ in the binary form $\mathbf{n}$, consider the weighted sum of \eqref{eq:tensor-op-layer} indexed by $\mathbf{n}$ from $0$ to $N-1$, \begin{equation}
    0 = \sum_{\mathbf{n}=0}^{N-1} \sum_{\mathbf{i} =0}^{D_\text{out}-1}\sum_{\mathbf{j} = 0}^{D_\text{in}-1}\me{\beta}{\mathbf{w} , \mathbf{n}}\me{\beta}{\mathbf{u}, \mathbf{i}}\left(D_\text{in}^{-1}\me{\mathbf{Y}}{\mathbf{n}, \mathbf{i}} - 
    f\left(\me{\mathbf{X}_k}{\mathbf{n}, \mathbf{i}_{I_k}, \mathbf{j}_{J_k}}\right)_{k=0}^{K-1}\right)\label{eq:tensor-op-aggr},
\end{equation}where $\mathbf{w}\sim\F^{\ceil{\log_2 N}}$ is the randomness chosen by the verifier, such that running the sumcheck protocol on \eqref{eq:tensor-op-aggr} proves all $N$ instances of \eqref{eq:tensor-op} simultaneously.

Alternatively, if the preceding execution of the sumcheck protocol produces a claim on the value of \(\me{\mathbf{Y}}{\mathbf{w}, \mathbf{u}}\), which is not necessarily verified against the commitment of \(\mathbf{Y}\) through the proof of evaluation, the sumcheck of \eqref{eq:tensor-op} can be reformulated as:
\begin{equation}
    \me{\mathbf{Y}}{\mathbf{w}, \mathbf{u}} = \sum_{\mathbf{n}=0}^{N-1}\sum_{\mathbf{j}=0}^{D_\text{in}-1}\me{\beta}{\mathbf{w}, \mathbf{n}}\me{\beta}{\mathbf{u}_{J_\beta}, \mathbf{j}_{J_\beta}}f\left(\me{\mathbf{X}_k}{\mathbf{n}, \mathbf{u}_{I_k}, \mathbf{j}_{J_k}}\right)_{k=0}^{K-1}, \label{eq:tensor-op-aggr-optimized}
\end{equation}
where \(J_\beta\subset\left[\ceil{\log_2 D_\text{in}}\right]\), similar to the \(I_k\) and \(J_k\), is contingent upon the nature of the tensor operations. By executing the sumcheck protocol on \eqref{eq:tensor-op-aggr-optimized}, the uncorroborated claim on the stacked \(\mathbf{Y}\) is translated to ones on \(\mathbf{X}_k\)s. These can be verified either by the proof of evaluation if \(\mathbf{X}_k\) is committed, or by invoking \eqref{eq:tensor-op-aggr-optimized} again if \(\mathbf{X}_k\) is an intermediate value that has not been committed. A comprehensive explanation of the sumcheck protocol's execution on \eqref{eq:tensor-op-aggr} and \eqref{eq:tensor-op-aggr-optimized} is provided in Appendix \ref{appendix:ac}. A crucial observation is that the sequence in which the operations appear in the original training process is \textbf{completely irrelevant} in the aggregated proof of them.

\begin{Exm} \label{ex:batch-up}
    In deep learning, matrix product $\mathbf{Y}^{(n)} = \mathbf{X}_0^{(n)}\mathbf{X}_1^{(n)}$ and Hadamard product $\mathbf{Y}^{(n)} = \mathbf{X}_0^{(n)}\odot\mathbf{X}_1^{(n)}$ are two frequently encountered tensor operations. Here, \(0\leq n \leq N-1\) indicates the indices of the operations, and we intend to batch the proofs of \(N\) operations for each type together.

As a simplification from \eqref{eq:tensor-op-layer}, these two operations can be proved by running the sumcheck protocol on the following equations:
\begin{align}
        \me{\mathbf{Y}^{(n)}}{\mathbf{u}_0, \mathbf{u}_1} &= \sum_{\mathbf{i}}\me{\mathbf{X}_0^{(n)}}{\mathbf{u}_0, \mathbf{i}}\me{\mathbf{X}_1^{(n)}}{\mathbf{i}, \mathbf{u}_1},\label{eq:exm-matmul}\\
        \me{\mathbf{Y}^{(n)}}{\mathbf{u}} &= \sum_{\mathbf{i}}\me{\beta}{\mathbf{u}, \mathbf{i}}\me{\mathbf{X}_0^{(n)}}{\mathbf{i}}\me{\mathbf{X}_1^{(n)}}{\mathbf{i}},\label{eq:exm-hadamard}
\end{align}
where \(\mathbf{u}_0, \mathbf{u}_1\) and \(\mathbf{u}\) are the random vectors compatible with the respective tensor dimensions in equations \eqref{eq:exm-matmul} and \eqref{eq:exm-hadamard}.

Following \eqref{eq:tensor-op-aggr-optimized}, the sumcheck for proof of the batched forms of the tensors can then be constructed as: \begin{align}
        \me{\mathbf{Y}}{\mathbf{w}, \mathbf{u}_0, \mathbf{u}_1} &= \sum_{\mathbf{n}}\sum_{\mathbf{i}}\me{\beta}{\mathbf{w}, \mathbf{n}}\me{\mathbf{X}_0}{\mathbf{n}, \mathbf{u}_0, \mathbf{i}}\me{\mathbf{X}_1}{\mathbf{n}, \mathbf{i}, \mathbf{u}_1},\label{eq:exm-matmul-batched}\\
        \me{\mathbf{Y}}{\mathbf{w}, \mathbf{u}} &= \sum_{\mathbf{n}}\sum_{\mathbf{i}}\me{\beta}{\mathbf{w}, \mathbf{n}}\me{\beta}{\mathbf{u}, \mathbf{i}}\me{\mathbf{X}_0}{\mathbf{n}, \mathbf{i}}\me{\mathbf{X}_1}{\mathbf{n}, \mathbf{i}},\label{eq:exm-hadamard-batched}
\end{align} where \(\mathbf{w}\sim\F^{\ceil{\log_2 N}}\) is a random vector, and can thus be proved directly without separation into $N$ proof instances, as long as the batched tensors $\mathbf{X}_0, \mathbf{X}_1, \mathbf{Y}$ are bound by commitments.
\end{Exm}

\paragraph{Re-indexing} In neural networks, repetitive tensor operations are common. Nevertheless, not every operation involves the same set of tensors. As a specific example, consider the gradient computation process. The gradient of the input to the first layer, which pertains to the data, remains uncomputed. Contrastingly, for subsequent layers, specifically for \(\ell \geq 2\), gradients of the inputs — the activations from the preceding layer — are computed based on model parameters \(\mathbf{W}^{(\ell)}\) along with other retained values of the respective layers.

Such an observation implies that, when batching proofs for these operations across the layers excluding the first, the multilinear-extension claim relates to the sequence \((\mathbf{W}^{(2)}, \mathbf{W}^{(3)}, \dots, \mathbf{W}^{(L)})\). This differs from the complete sequence \((\mathbf{W}^{(1)}, \mathbf{W}^{(2)}, \mathbf{W}^{(3)}, \dots, \mathbf{W}^{(L)})\), which finds application during forward propagation encompassing all \(L\) layers.

Such a difference highlights the criticality of re-indexing tensors, particularly when batching proofs over various layers and operations in the neural network. Appropriate indexing is imperative for the precise alignment of tensor operations, which in turn ensures the integrity and efficiency of computations along with their verifiable assertions.

Considering a stacked tensor comprising \(N\) tensors of dimension \(D\), represented as \(\mathbf{X} = (\mathbf{X}^{(0)}, \mathbf{X}^{(1)}, \dots, \mathbf{X}^{(N-1)})\), and another set of \(K\) tensors, \(\mathbf{X}_k (0\leq k \leq K-1)\), that manifest as permutations of \(\mathbf{X}\), one can represent \(\mathbf{X}_k\) as \[(\mathbf{X}^{(j_{k, 0})}, \mathbf{X}^{(j_{k, 1})}, \dots, \mathbf{X}^{(j_{k, N_k-1})}),\] where each \(j_{k, i}\) belongs to the range \([N]\). Assuming the prover makes a claim on each \(\mathbf{X}_k\), denoted as \(\me{\mathbf{X}_k}{\mathbf{u}_{k}, \mathbf{u}}\), with different randomness \(\mathbf{u}_k \sim \F^{\log_2 \ceil{N_{k}}}\) over the added dimension from stacking, and consistent randomness across other dimensions as \(\mathbf{u}\sim\F^{\ceil{\log_2D}}\). Given randomness coefficients \(r_0, r_1, \dots, r_{K-1} \sim \F\) as determined by the verifier, the relation 
\begin{align}
    \nonumber ~& \sum_{k=0}^{K-1}r_k\me{\mathbf{X}_k}{\mathbf{u}_k, \mathbf{u}} \\
    \nonumber =& \sum_{k=0}^{K-1}r_k\sum_{j=0}^{N_k-1}\me{\beta}{\mathbf{u}_k, j}\me{\mathbf{X}_k}{j, \mathbf{u}}\\
    \label{eq:sc-reindex} =& \sum_{i=0}^{N-1}\left(\sum_{k=0}^{K-1}\sum_{j=0}^{N_k-1} r_k\me{\beta}{\mathbf{u}_k, j}p_k(i, j)\right)\me{\mathbf{X}}{i, \mathbf{u}}
\end{align}
is valid, where \(p_k(i, j)\) equals 1 if the \(i\)-th component of \(\mathbf{X}\) matches the \(j\)-th element of \(\mathbf{X}_k\), and 0 otherwise (both the prover and verifier are privy to $p_k$). Consequently, by representing \(i\) in binary, executing the sumcheck protocol on \eqref{eq:sc-reindex} verifies the reordering of indices of stacked tensors from \(\mathbf{X}_k\)s to \(\mathbf{X}\), while achieving an \(O(TD)\) proving time and \(O(\log T)\) proof size.

\paragraph{Incorporating zkReLU} In Section \ref{sec:zkrelu}, we transform the correctness verification of ReLU into the validation of a tensor operation involving \(\mathbf{Z}, \mathbf{A}, \mathbf{G_A}, \mathbf{G_Z}\), and \(\mathbf{aux}\). With the incorporation of \(\mathbf{s}_{Q+R}\) and \(\mathbf{s}_Q\), this operation can be verified using the sumcheck protocol. Consequently, zkReLU seamlessly integrates into the FAC4DNN framework, enabling aggregation across layers and training steps.

Another approach to positioning zkReLU within the FAC4DNN framework is to acknowledge that the stacked tensors \(\mathbf{Z}, \mathbf{A}, \mathbf{G_A}\), and \(\mathbf{G_Z}\) are anchored by the commitment of \(\mathbf{aux}\), bypassing their individual commitments. However, to validate the non-arithmetic relations among these tensors, and upon establishing the claimed evaluations on the multilinear extension of these four tensors, the proof of evaluations on \(\mathbf{aux}\) must be augmented with the zkReLU sumcheck.

\section{Putting everything together} 

The proof produced by zkDL, employing the zkReLU protocol coupled with the compatible circuit design of FAC4DNN, delivers notable improvements in both computational and communicational efficiency for the prover and verifier, all while meeting security and privacy standards.

In this section, we assume that the training process spans \(T\) steps and the neural network includes \(L\) layers. Theoretically, maximum proof compression can be attained by amalgamating all the \(O(TL)\) repetitive units throughout the entire training process. However, such a method would necessitate the retention of all \(T\) training checkpoints. Hence, we adopt a more encompassing premise: proofs from every sequence of \(T'\) training steps are collectively processed. We proceed with the assumption that \(T'\) divides \(T\); if not, zero padding can be implemented. When \(T'=T\), it signifies that proofs from all training steps are aggregated, whereas \(T'=1\) indicates the verification of each step separately.

Moreover, in a typical neural network, layers of various types and sizes coexist, with each type linked to a distinct set of tensor operations. This diversity in operations poses challenges when aggregating proofs, even though these proofs arise from distinct layers and have been rendered independent by FAC4DNN. As an illustration, the specialized sumcheck protocols for convolution and matrix multiplication are profoundly different, complicating their aggregation. In the same vein, aggregating proofs for one large and several smaller FCs is not efficient, as this results in an undue allocation of computational resources to padded zeros.

Therefore, in this discussion, we introduce \(N_A\) families of tensor operations, denoted as \(\mathcal{A}_0, \mathcal{A}_1, \dots, \mathcal{A}_{N_A-1}\). Each family, \(\mathcal{A}_i\), includes tensor operations of analogous nature and dimensionality. For instance, matrix multiplications found in several FC layers of similar sizes or zkReLUs present at the output of a group of layers with matching dimensions fall under this category. The term \(\abs{\mathcal{A}_i}\) represents the count of operations within \(\mathcal{A}_i\) for every training step.

Similarly, we classify all tensors involved in each training step—including the training data and the auxiliary input in zkReLU—into \(N_T\) families: \(\mathcal{T}_0, \mathcal{T}_1, \dots, \mathcal{T}_{N_T-1}\). Each family contains tensors of similar dimensionality and the same nature, such as the weights of several FC layers of corresponding sizes. The symbol \(\abs{\mathcal{T}_i}\) denotes the quantity of tensors within \(\mathcal{T}_i\) during each training step.% \hc{These may be moved to previous parts}

The complete zkDL protocol involving both the prover and verifier is outlined as Protocol \ref{protocol:zkdl}. The security and overhead analyses for this protocol are presented in Sections \ref{sec:security-anal} and \ref{sec:overhead-anal}, respectively.

\begin{Exm}[FCNN]\label{exm:fcnn} We analyze a fully connected neural network (FCNN) consisting of \(L\) layers. We assume the use of ReLU activation at the output of each hidden layer and the application of the square loss. For every training step, using a data batch represented as \((\mathbf{X} = \mathbf{A}^{(0)}, \mathbf{Y})\), the correctness of the subsequent tensor operations is both necessary and sufficient to ensure the correctness of the training:

\begin{itemize}
    \item Forward propagation through each FC layer:
    \begin{align}
        \label{eq:fcnn-Z}\mathbf{Z}^{(\ell)} &= \mathbf{A}^{(\ell - 1)}\mathbf{W}^{(\ell)}, & 1\leq \ell \leq L,
    \end{align}
    \item Backward propagation of the square loss function:
    \begin{equation}
        \label{eq:fcnn-GZ-last}\mathbf{G}_{\mathbf{Z}}^{(L)} = {\mathbf{Z}^{(L)}} - \mathbf{Y},
    \end{equation}
    \item The backward propagation through each FC layer:
    \begin{align}
        \label{eq:fcnn-GA}\mathbf{G}_{\mathbf{A}}^{(\ell)} &= \mathbf{G}_\mathbf{Z}^{(\ell + 1)}{\mathbf{W}^{(\ell + 1)}}^\top, & 1 \leq \ell \leq L-1, \\
    \label{eq:fcnn-GW}\mathbf{G}_\mathbf{W}^{(\ell)} &= {\mathbf{G}_\mathbf{Z}^{(\ell)}}^\top {\mathbf{A}^{(\ell-1)}}, & 1\leq \ell \leq L,
    \end{align}
    \item ReLU and its backward propagation, as per zkReLU.
    \item Parameter updates, based on the selected optimizer.
\end{itemize}

Each type of tensor operation forms its own distinct family of tensor operations: all instances of \eqref{eq:fcnn-Z} constitute one family, all instances of \eqref{eq:fcnn-GA} another, and likewise for \eqref{eq:fcnn-GW}, zkReLU, and parameter updates. Similarly, families of tensors can be delineated based on tensors that share inherent characteristics. This applies to tensors signified by the same notation but differentiated by their superscripts, such as \( \mathbf{W}^{(1)}, \mathbf{W}^{(2)}, \ldots, \mathbf{W}^{(L)} \), which represent the parameters across all layers. With a consensus on the formation of families, the prover and verifier can employ Protocol \ref{protocol:zkdl} to generate proof for the entire training process, consolidating the proofs for different layers and training steps, thus optimizing computational and communicational expenses.

\begin{protocol*}
    \caption{zkDL}
    \label{protocol:zkdl}
    \begin{algorithmic}[1]
        %\Require %$\mathcal{T}$ knows $\mathbf{X}, \mathbf{y}, \mathbf{W}, \mathbf{G}_\mathbf{W}, \mathbf{aux}$, $\mathcal{V}$ knows $\com{\,}$
        \For {$t \gets 0, T', 2T', \dots, T-T'$}
            \State Prover executes training steps $t, t+1, \dots, t+T'-1$. \label{alg-line:zkdl-train}
            \For {$i \gets 0, 1, \dots N_T-1$} \Comment{$\mathbf{S}^{(t)}_i$ is the stack of \emph{all tensors in $\mathcal{T}_i$ and in training steps $t$ to $t+T'-1$}.}
                \State Prover computes commitment $\com{i}^{(t)}\gets \texttt{Commit}\left(\mathbf{S}^{(t)}_i\right)$, and sends $\com{i}^{(t)}$ to verifier. \label{alg-line:zkdl-commit}
            \EndFor
            \For {$j \gets 0, 1, \dots, N_A-1$} \Comment{$f_j^{(t)}$ is the aggregation of \emph{all operations in $\mathcal{A}_j$ and in training steps $t$ to $t+T'-1$}.}
                \State Prover and verifier execute the sumcheck protocol \eqref{eq:tensor-op-aggr} for the aggregated tensor operation $f_j^{(t)}$. \label{alg-line:zkdl-sumcheck}
                \State Prover and verifier execute the sumcheck protocol \eqref{eq:sc-reindex}, output $\me{\mathbf{S}_i^{(t)}}{\mathbf{u}_i^{(t)}}$ for each $0\leq i \leq N_T-1$. \label{alg-line:zkdl-reindex}
            \EndFor
            \For {$i \gets 0, 1, \dots N_T-1$}
                \State Prover and verifier executes the proof of evaluations for $\me{\mathbf{S}^{(t)}_i}{\mathbf{u}_i^{(t)}}$ with respect to $\com{i}^{(t)}$. \label{alg-line:zkdl-open}
            \EndFor
        \EndFor 
\end{algorithmic}
\end{protocol*}
\end{Exm}

\subsection{Security analysis} \label{sec:security-anal}

Protocol \ref{protocol:zkdl} achieves perfect completeness and near-certain soundness, ensuring that the proof's acceptance by the verifier is equivalent to the prover's adherence to the Protocol \ref{protocol:zkdl} (the training process and the proof combined) almost surely.

\begin{Thm}[Completeness]\label{thm:completeness}
    For a neural network employing the ReLU activation function, where ReLU and its backpropagation stand as the sole non-arithmetic operations, and given that the unscaled \(\mathbf{Z}^{(\ell)}\) and \(\mathbf{G}_\mathbf{A}^{(\ell)}\) are \((Q+R)\)-bit integers for the ReLU activation at every layer \(\ell\), the verifier, under the semi-honest assumption, accepts the proof with a probability of \(1\) provided the prover strictly adheres to Protocol \ref{protocol:zkdl}.
\end{Thm}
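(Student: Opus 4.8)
The plan is to establish completeness by a straightforward but careful induction over the structural components of Protocol~\ref{protocol:zkdl}, showing that each step that the honest prover executes produces exactly the value that the (semi-honest) verifier expects, so that no check ever rejects. The argument decomposes along the three families of operations invoked in the protocol: (i) the commitments on line~\ref{alg-line:zkdl-commit}, (ii) the aggregated sumchecks on lines~\ref{alg-line:zkdl-sumcheck} and~\ref{alg-line:zkdl-reindex}, and (iii) the proofs of evaluation on line~\ref{alg-line:zkdl-open}. For (i) and (iii) completeness is inherited directly: the Pedersen/Hyrax commitment scheme and its proof of evaluation are perfectly complete, so an honestly-computed commitment $\com{i}^{(t)}$ together with an honestly-derived claimed value $\me{\mathbf{S}_i^{(t)}}{\mathbf{u}_i^{(t)}}$ always passes. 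The real content is in (ii), and in checking that the hypotheses of the theorem guarantee every equation fed into a sumcheck is a genuine polynomial identity over $\F$ when the prover follows the training logic.

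The first key step is to verify that, under the stated bit-width hypothesis (the unscaled $\mathbf{Z}^{(\ell)}$ and $\mathbf{G}_\mathbf{A}^{(\ell)}$ are $(Q+R)$-bit integers) and the assumption that ReLU and its backpropagation are the only non-arithmetic operations, the honest prover can indeed produce an auxiliary input $\mathbf{aux}\in\{0,1\}^{2\times D\times(Q+R)}$ satisfying \emph{all} of \eqref{eq:aux-Z}, \eqref{eq:aux-GA}, \eqref{eq:aux-A}, \eqref{eq:aux-GZ}, \eqref{eq:aux-bin} simultaneously. This is where I expect to spend the most care: one must check that the bit-width bound makes the two's-complement decomposition via $\mathbf{s}_{Q+R}$ and $\mathbf{s}_Q$ well-defined inside $\F$ (no wraparound, using $2^{Q+R}\ll|\F|$ from the security assumptions), that the rescaling remainders $\mathbf{R_Z},\mathbf{R_{G_A}}$ land in $[-2^{R-1},2^{R-1})$ so that the rounding identity $\mathbf{Z}=2^R\mathbf{Z}'+\mathbf{R_Z}$ holds exactly, and that $\mathbf{aux}_{[0,:,Q+R-1]}$ really equals the negativity indicator $\mathbf{B}_{Q-1}$ so that \eqref{eq:aux-A} and \eqref{eq:aux-GZ} reproduce $\mathbf{A}=\1\{\mathbf{Z}'\ge 0\}\odot\mathbf{Z}'$ and the corresponding gradient masking. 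Once these are checked, equations \eqref{eq:zkrelu-A}--\eqref{eq:zkrelu-GA} and their $\mathbf{aux}$-reformulations become exact identities on the honestly-computed tensors.

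The second key step is to observe that, for an FCNN as in Example~\ref{exm:fcnn} (and analogously for convolutional layers), every non-ReLU operation listed---the forward products \eqref{eq:fcnn-Z}, the loss gradient \eqref{eq:fcnn-GZ-last}, the backward products \eqref{eq:fcnn-GA}, \eqref{eq:fcnn-GW}, and the parameter updates---is by definition the value the honest prover writes down during training, so each instance of \eqref{eq:tensor-op} holds as an identity; then the aggregated forms \eqref{eq:tensor-op-aggr} and \eqref{eq:tensor-op-aggr-optimized}, being fixed $\F$-linear combinations (weighted by $\me{\beta}{\mathbf{w},\mathbf{n}}$) of these identities, also hold for every choice of verifier randomness $\mathbf{w},\mathbf{u}$. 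Likewise the re-indexing relation \eqref{eq:sc-reindex} is an identity because $p_k(i,j)$ is the correct permutation indicator known to both parties. The final step is then to invoke perfect completeness of the sumcheck protocol itself (Section~\ref{sec:sumcheck}): when the summand is a genuine low-degree polynomial identity, the honest prover's messages are accepted at every round and the protocol terminates with claimed multilinear-extension evaluations that are the true ones, which are precisely the values then checked on line~\ref{alg-line:zkdl-open} against the honest commitments. Chaining these three observations over all $t$, all families $\mathcal{A}_j$ and $\mathcal{T}_i$, and all layers and steps within each block of $T'$, gives acceptance with probability exactly $1$. I would also note in passing that the zero-knowledge sumcheck variants and the hiding randomness in the commitments do not affect completeness, since those additions are designed to have the honest transcript still verify.
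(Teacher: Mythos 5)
Your proposal is correct and follows essentially the same decomposition as the paper's proof sketch in Appendix~\ref{appendix:completeness}: both split the argument into the three places where the verifier could possibly reject (the aggregated sumchecks in Lines~\ref{alg-line:zkdl-sumcheck} and~\ref{alg-line:zkdl-reindex}, and the proofs of evaluation in Line~\ref{alg-line:zkdl-open}) and invoke perfect completeness of the corresponding sub-protocols. The one substantive difference is that you frontload what you call the ``first key step'': explicitly verifying that the theorem's bit-width hypothesis (the unscaled \(\mathbf{Z}^{(\ell)}\), \(\mathbf{G}_\mathbf{A}^{(\ell)}\) are \((Q+R)\)-bit integers, with \(2^{Q+R}\ll|\F|\)) guarantees that the honest prover can actually construct an auxiliary input \(\mathbf{aux}\in\{0,1\}^{2\times D\times(Q+R)}\) satisfying \eqref{eq:aux-Z}--\eqref{eq:aux-bin} simultaneously, with well-defined two's-complement decompositions, remainders in \([-2^{R-1},2^{R-1})\), and \(\mathbf{aux}_{[0,:,Q+R-1]}\) equal to the true negativity indicator. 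The paper's proof treats this implicitly under the phrase ``given that the prover adheres to the protocol'' and never explicitly cites the bit-width hypothesis; your version is the more careful one, since that hypothesis is exactly what makes honest adherence \emph{possible} --- without it, some honestly-computed \(\mathbf{Z}^{(\ell)}\) might not even admit a \((Q+R)\)-bit binary representation, and \eqref{eq:aux-bin} would be unsatisfiable. (Interestingly, this feasibility argument closely parallels the reasoning the paper does spell out, but only on the soundness side, in the proof of Lemma~\ref{lem:soundness-zkrelu}.) Your observation that the aggregated sumcheck equations \eqref{eq:tensor-op-aggr} and \eqref{eq:tensor-op-aggr-optimized} are just \(\me{\beta}{\mathbf{w},\cdot}\)-weighted linear combinations of per-instance identities, and therefore hold for any verifier randomness when each instance holds, is likewise a cleaner justification than the paper's round-by-round \(g_t(0)+g_t(1)\) consistency check, though the two are of course equivalent in content.
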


In Appendix \ref{appendix:completeness}, the validity of Theorem \ref{thm:completeness} is predicated on the perfect completeness achieved by the sumcheck protocols for zkReLU, the arithmetic operations, and the proofs of evaluations. This ensures that, for a prover fully adhering to Protocol \ref{protocol:zkdl}, all checks are passed. However, establishing the soundness of Protocol \ref{protocol:zkdl} is more intricate. The soundness of zkReLU must capture the actual correctness of the ReLU non-arithmetic operation. Additionally, the soundness of FAC4DNN must encompass the correctness of all aggregated tensor operations.

\begin{Thm}[Soundness]\label{thm:soundness}
    For a neural network employing the ReLU activation function, where ReLU and its backpropagation stand as the sole non-arithmetic operations, if any tensor operation (including ReLU) is incorrectly computed by the prover, the verifier, operating under the semi-honest assumption, accepts the proof produced by Protocol \ref{protocol:zkdl} with a probability of \(\negl{\lambda}\).
\end{Thm}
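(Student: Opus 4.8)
\textbf{Proof proposal for Theorem \ref{thm:soundness}.}

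The plan is to prove soundness by contraposition: assuming the verifier accepts with non-negligible probability, I would show that every tensor operation (including each invocation of ReLU) must have been computed correctly, except with probability $\negl{\lambda}$. The argument proceeds by a union bound over the three sources of error --- the binding property of the Pedersen/Hyrax commitments, the soundness of the sumcheck protocols, and the soundness of the proofs of evaluation --- each of which fails only with probability $\negl{\lambda}$ since $\abs{\F} = \Omega(2^{2\lambda})$ and all circuit dimensions are $\poly{\lambda}$.

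First I would fix the committed stacked tensors $\mathbf{S}_i^{(t)}$: by the computational binding of the commitment scheme, once $\com{i}^{(t)}$ is sent in line \ref{alg-line:zkdl-commit}, the values these commitments open to in line \ref{alg-line:zkdl-open} are uniquely determined (up to $\negl{\lambda}$ failure), so we may speak of ``the'' tensors the prover committed to. Next, for each aggregated operation family $\mathcal{A}_j$, I would invoke the soundness of the sumcheck protocol on \eqref{eq:tensor-op-aggr}: if the claimed output evaluation does not equal the true sum, the verifier rejects except with probability $O(\deg \cdot (\ceil{\log_2 N} + \ceil{\log_2 D_\text{out}} + \ceil{\log_2 D_\text{in}})/\abs{\F}) = \negl{\lambda}$. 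The key point --- already emphasized in Section \ref{sec:ac} --- is that because zkReLU severs the arithmetic dependency between consecutive layers and both sides of every severance are bound by commitments, the $N = O(TL)$ (or $O(T'L)$ within a window) instances are genuinely \emph{independent} arithmetic relations, so using the same randomness $\mathbf{u}$ across all parallel components introduces no soundness loss; a cheating prover on any one instance forces the aggregated sumcheck claim \eqref{eq:tensor-op-aggr} to be false for all but a $\negl{\lambda}$ fraction of the verifier's $\mathbf{w}$ by Schwartz--Zippel applied to the $\mathbf{n}$-dimension. I would then chain: the sumcheck on \eqref{eq:tensor-op-aggr} and the reindexing sumcheck \eqref{eq:sc-reindex} reduce every operation's correctness to claimed evaluations $\me{\mathbf{S}_i^{(t)}}{\mathbf{u}_i^{(t)}}$ on the committed stacks, and the proof of evaluation in line \ref{alg-line:zkdl-open} ties these back to $\com{i}^{(t)}$ --- which, by binding, encode the prover's actual tensors. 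Hence acceptance implies that all \eqref{eq:tensor-op-aggr}-type relations hold on the real tensors, i.e. every tensor operation is correct.

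The part requiring the most care is the soundness of zkReLU itself, since ReLU is \emph{non-arithmetic} and the theorem demands that acceptance certify the \emph{actual} ReLU relation, not merely some arithmetic identity. Here I would argue that the conjunction of the AIVP sumcheck on \eqref{eq:aux-bin} (forcing $\mathbf{aux}$ to be genuinely $\{0,1\}$-valued), the recovery relations \eqref{eq:aux-Z}--\eqref{eq:aux-GA} (which, given that $\mathbf{Z}, \mathbf{G_A}$ are $(Q+R)$-bit integers as hypothesized, pin $\mathbf{aux}$ to the \emph{true} binary representations --- using uniqueness of two's-complement expansion, exactly where the bit-width assumption is used), and the relations \eqref{eq:aux-A}, \eqref{eq:aux-GZ} together force $\mathbf{A} = \mathrm{ReLU}(\round{\mathbf{Z}/2^R})$ and the matching backward relation exactly. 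Schwartz--Zippel over the sumcheck randomness converts ``these polynomial identities fail on the committed tensors'' into ``verifier rejects except with $\negl{\lambda}$''. The main obstacle, then, is bookkeeping the reduction chain cleanly --- showing that the evaluation points $\mathbf{u_A}, \mathbf{u_Z}, \mathbf{u_{G_Z}}, \mathbf{u_{G_A}}$ handed to zkReLU by the outer aggregated sumchecks are consistent with those used in the proof of evaluation on $\mathbf{aux}$, so that a single soundness failure probability covers the whole composition --- and in verifying that the bit-width hypothesis genuinely prevents field-wraparound attacks (e.g. a malicious prover encoding a ``negative'' value that is actually a huge positive field element); this is precisely what restricting $\mathbf{Z}', \mathbf{G_A}'$ to $[-2^{Q-1}, 2^{Q-1})$ and $\mathbf{R_Z}, \mathbf{R_{G_A}}$ to $[-2^{R-1}, 2^{R-1})$ via the binary decomposition of $\mathbf{aux}$ achieves. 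Collecting all $\negl{\lambda}$ terms over the $\poly{\lambda}$-many families and training-step windows by a union bound yields the claimed $\negl{\lambda}$ soundness error.
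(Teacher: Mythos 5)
Your proposal is correct and follows essentially the same route as the paper's proof: trace a cheating prover through lines \ref{alg-line:zkdl-sumcheck}, \ref{alg-line:zkdl-reindex}, and \ref{alg-line:zkdl-open} of Protocol \ref{protocol:zkdl}, reduce each to a Schwartz--Zippel or commitment-binding failure of probability $\negl{\lambda}$, and handle zkReLU by arguing that the binary constraint on $\mathbf{aux}$ together with the recovery relations \eqref{eq:aux-Z}--\eqref{eq:aux-GZ} and the bit-width assumption force the true ReLU relation --- which is exactly Lemma \ref{lem:soundness-zkrelu} in the paper, which you have inlined rather than stated separately. The only cosmetic differences are that the paper argues directly rather than by contraposition, and your proposal makes explicit two points the paper leaves implicit (the shared randomness $\mathbf{u}$ is safe because commitments decouple the parallel components, and $2^{Q+R}\ll\abs{\F}$ precludes wraparound); neither changes the substance.
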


In addition to fulfilling the completeness and soundness requirements, the zkDL protocol also guarantees zero-knowledge, ensuring that it reveals no information about the training set and model parameters. This property is formalized in  Appendix \ref{appendix:zero-knowledge}.

\subsection{Overhead analysis} \label{sec:overhead-anal}

\paragraph{Prover time} Compared to general-purpose ZKP backends, zkDL does not necessitate alterations to the original structure of neural networks to accommodate the internal framework of the ZKP backends. Instead, zkDL aligns seamlessly with the inherent tensor-based architectures of the computations intrinsic to deep learning training. This alignment enables the effective utilization of the pre-existing computational environment tailored for parallel tensor computations. Furthermore, the aggregation methodologies introduced by the design of FAC4DNN overcome the limitations imposed by the sequential arrangement of neural network layers and training phases. This innovation yields a more pronounced parallelism in proof generation than in the training procedure itself. These factors combined contribute to zkDL being the first viable work on verifiable training for large neural networks.

\paragraph{Proof size} There are two primary components influencing the proof size of zkDL as delineated in Protocol \ref{protocol:zkdl}: the commitment size in Line \ref{alg-line:zkdl-commit} and the sizes of the sumcheck-based proofs in Lines \ref{alg-line:zkdl-sumcheck} and \ref{alg-line:zkdl-reindex}. For each tensor class, \(\mathcal{T}_i\), we assume that the size of every tensor within \(\mathcal{T}_i\) is \(O(\sigma_i)\). Given the square-root growth of the commitment size in Hyrax, the commitment size employing the traditional sequential generation of the proof is \(O\left(T\sum_{i=0}^{N_T-1}\abs{\mathcal{T}_i}\sqrt{\sigma_i}\right)\). Conversely, the commitment size for the aggregated tensors is \(O\left(\frac{T}{T'}\sum_{i=0}^{N_T-1}\sqrt{T'\abs{\mathcal{T}_i}\sigma_i}\right)\), which simplifies to \(O\left(\frac{T}{\sqrt{T'}}\sum_{i=0}^{N_T-1}\sqrt{\abs{\mathcal{T}_i} \sigma_i}\right)\). 

On the other hand, for the sumcheck protocols, proof sizes are generally logarithmic with respect to the complexity of the operation. We represent the complexity of a singular operation in each \(\mathcal{A}_i\) as \(O(\psi_i)\), rendering the proof size of one operation as \(O(\log \psi_i)\). With the traditional sequential proof, the cumulative verification cost would, in a straightforward manner, accumulate to \(O\left(T\sum_{i=0}^{N_A-1}\abs{\mathcal{A}_i}\log\psi_i\right)\). However, when using FAC4DNN for proof aggregation, the multiplicative factor of the total aggregated proof instances is replaced by a nearly negligible additive term that is logarithmic in this number. Hence, the refined proof size is \(O\left(\frac{T}{T'}\left(N_A \log{T'} + \sum_{i=0}^{N_A-1}\left(\log\abs{\mathcal{A}_i} + \log \psi_i\right)\right)\right)\).

\paragraph{Verifier time} The analysis of the verifier's time closely mirrors the analysis of the proof size. With Hyrax \cite{hyrax}, the verifier's time in the proof of evaluation scales as the square root of the committed tensor's size. Consequently, the verifier's time for each tensor class \(\mathcal{T}_i\) is reduced by a factor of \(O\left(\sqrt{T'|\mathcal{T}_i|}\right)\). On the other hand, given the logarithmic nature of the verifier's time in the sumcheck protocol, the verifier's time for each tensor class \(\mathcal{A}_i\) experiences a reduction almost by a factor of \(O\left(T'|\mathcal{A}_i|\right)\).

\section{Experiments} \label{sec:experiments}

We developed zkDL in CUDA. Our \textbf{open-source implementation} of zkDL is available at \url{https://github.com/jvhs0706/zkdl-train}. To ensure seamless interfacing with tensor structures and their respective specialized aggregated proofs, our approach entailed creating customized CUDA kernels. These kernels cater to tensors, deep learning layers, and the novel cryptographic protocols presented in this research. Our framework builds upon \texttt{ec-gpu} \cite{ec-gpu}, which is a CUDA implementation of the BLS12-381 curve, guaranteeing 128-bit security. The efficacy of zkDL was assessed using Example \ref{exm:fcnn}. Given the challenges posed by quantization-induced rounding errors, we implemented a scaling factor of \(2^{16}\). This assured that every real-number computation within the system was encapsulated in the interval \([-2^{15}, 2^{15})\). Consequently, these numbers were aptly scaled as 32-bit integers. It is pivotal to highlight that throughout our experimental iterations, overflow incidents were conspicuously absent. Our experimental evaluations were orchestrated on a computing node nestled within a cluster, equipped with a Tesla A100 GPU. 

\paragraph{Baselines}
Earlier research on zero-knowledge verifiable inference did not offer ample techniques to incorporate verifiability into the training phase. In contrast, groundbreaking studies on zero-knowledge verifiable training, such as \cite{veriml, unlearning, unlearning-2}, predominantly concentrated on classical machine learning models, leaving real-world scale deep neural networks untouched. Nevertheless, for a comprehensive illustration of the benefits of aggregating proofs across multiple layers and training steps, we positioned zkDL against the traditional sequential proof generation found in the GKR protocol, which fully adheres to the reverse sequence of training steps, as well as the order of layers navigated throughout the training procedure.

\begin{figure*}[htbp]
    \centering
    \includegraphics[width=\linewidth]{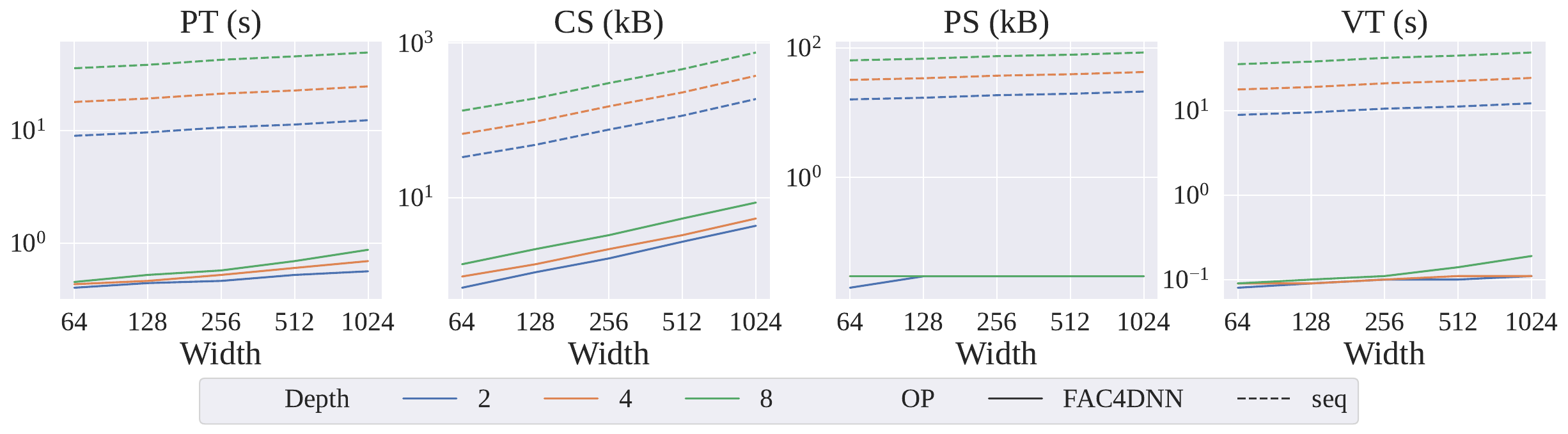}
    \caption{zkDL's performance under different NN sizes: OP (order of proving), PT (proving time), CS (commitment size), PS (proof size), VT (verifying time). } %\hy{delete the CT subfigure.}
    \label{fig:additional-exp}
\end{figure*}
%\hy{if it is on a per-step basis, how could you aggregate many training steps?}
% \hy{Reviewers may complain that while you reduce the proof time, the committing time becomes a new bottleneck, 10x longer than proving time. This is because you commit to all intermediate values, moving the proving time to the committing time.}

\paragraph{Power of FAC4DNN} To showcase the efficiency of FAC4DNN, which liberates us from the mandate of strictly adhering to the sequential order of computations during proof generation, we embarked on a comparative analysis between zkDL and less compact proof variations. Specifically, we juxtaposed it against \textbf{1)} the proof that unwaveringly conforms to a reverse computation sequence, and \textbf{2)} a proof that only undergoes aggregation within individual training steps. 

The experimental trials were executed on the CIFAR-10 dataset, utilizing an 8-hidden-layer perceptron, activated by ReLU, comprising 1,024 neurons in each layer. Notably, the input and output layers diverged in size, hosting 3,072 and 10 neurons, respectively. Therefore, the number of parameters in the neural network is as large as 10 millions. For the sake of comprehensiveness, our approach was to modify the aggregation span across different numbers of steps and subsequently record the findings, as tabulated in Table \ref{tab:results}.

\begin{table}
\caption{Per-step computational and communicational costs of zkDL: \(T'\) (number of aggregated training steps; \tt{seq} = no aggregation, 1 = within-step aggregation), PT (proving time), CS (commitment size), PS (proof size), VT (verifying time).} \label{tab:results}
%\resizebox{1.0\linewidth}{!}{%
\centering
\begin{tabular}{cccccc}
\hline
$T'$                  & BS & PT (s) & CS (kB) & PS (kB) & VT (s) \\ \hline
\multirow{3}{*}{\tt{seq}}  & 16 & 45     & 510     & 78      & 45     \\
                      & 32 & 47     & 610     & 82      & 46     \\
                      & 64 & 49     & 750     & 85      & 46     \\ \hline
\multirow{3}{*}{1}    & 16 & 6.0    & 230     & 11      & 6.0    \\
                      & 32 & 6.1    & 260     & 11      & 6.2    \\
                      & 64 & 6.2    & 270     & 12      & 6.3    \\ \hline
\multirow{3}{*}{4}    & 16 & 1.8    & 110     & 3.1     & 1.8    \\
                      & 32 & 1.8    & 120     & 3.3     & 1.9    \\
                      & 64 & 1.9    & 120     & 3.4     & 1.9    \\ \hline
\multirow{3}{*}{16}   & 16 & 0.83   & 56      & 0.81    & 0.80   \\
                      & 32 & 0.83   & 60      & 0.86    & 0.82   \\
                      & 64 & 0.84   & 61      & 0.88    & 0.84   \\ \hline
\multirow{3}{*}{64}   & 16 & 0.84   & 28      & 0.31    & 0.53   \\
                      & 32 & 0.83   & 30      & 0.32    & 0.52   \\
                      & 64 & 0.85   & 30      & 0.34    & 0.56   \\ \hline
\multirow{3}{*}{256}  & 16 & 0.84   & 14      & 0.085   & 0.30   \\
                      & 32 & 0.84   & 15      & 0.089   & 0.33   \\
                      & 64 & 0.84   & 15      & 0.091   & 0.33   \\ \hline
\multirow{3}{*}{1,024} & 16 & 0.85   & 8.2     & 0.030   & 0.18   \\
                      & 32 & 0.85   & 8.6     & 0.032   & 0.19   \\
                      & 64 & 0.86   & 8.7     & 0.033   & 0.19   \\ \hline
\end{tabular}
%}
\end{table}

It is pertinent to highlight that in a bid to provide a more lucid perspective on the superiority of zkDL, we have presented the computational and communication expenditures averaged \emph{per-step}, as opposed to the cumulative costs associated with aggregated units of diverse step counts.

The data presented in Table \ref{tab:results} reveals a discernible trend: as the number of aggregated training steps increases, various performance metrics of zkDL generally improve. Notably, while FAC4DNN does not offer improvements in the theoretical complexity concerning proving time, the parallel processing capabilities of CUDA still manage to deliver a marked reduction in real-world proving durations. This improvement continues until it plateaus in less than 1 second. This limitation is due to CUDA's memory constraints, after which point data transfers become necessary within every aggregated proof. Furthermore, with sound theoretical underpinnings, as the number of aggregated steps increases, the sizes of the commitments and proofs, as well as the verifier times, decrease significantly, thanks to the design of FAC4DNN.

% \begin{figure}[htbp]
%     \centering
%     \includegraphics[width=\linewidth]{}
%     \caption{zkDL's performance under different NN sizes: OP (order of proving), CT (committing time), PT (proving time), CS (commitment size), PS (proof size), VT (verifier time).}
%     \label{fig:additional-exp}
% \end{figure}

\paragraph{Further experimental results} In Figure \ref{fig:additional-exp}, we evaluate the efficiency of zkDL when applied to neural networks of different dimensions. Keeping the number of aggregated steps constant at 1,024 and the batch size at 64, which is consistent with the largest experimental setting shown in Table \ref{tab:results}, we modify the neural network's width and depth. The outcomes are then contrasted with the fully sequential proof.

In alignment with the data in Table \ref{tab:results}, zkDL consistently outperforms the sequential proof in all facets. An unusual observation from both Table \ref{tab:results} and Figure \ref{fig:additional-exp} is that the verifying time is not considerably shorter than the proving time, a deviation from what is typically observed in most generic ZKP frameworks. This anomaly can be explained by the verification protocol's predominantly sequential nature, preventing it from fully harnessing the parallel computational capabilities of CUDA. Hence, the development of verification algorithms that are both theoretically and empirically expedient, particularly those optimized for parallel computing environments like CUDA, may forge the path for future advancements in specialized ZKPs for deep learning training.

\section{Related work} 

\paragraph{Verifiable machine learning inference} Zero-knowledge proof (ZKP) systems have emerged as important solutions to address security and privacy concerns in machine learning. These systems enable the verification of machine learning inference correctness without disclosing the underlying data or model. Notably, zkCNN \cite{zkcnn} introduced an interactive proof protocol for convolutional layers, based on the GKR protocol \cite{gkr} and its refinements \cite{libra,DBLP:conf/ccs/ZhangLWZSXZ21,orion}. This solution provides zero-knowledge verifiable inference for VGG-scale convolutional neural networks, expanding verifiable computations to modern deep learning. Meanwhile, zk-SNARK-based inference, represented by ZEN \cite{zen}, vCNN \cite{vcnn}, pvCNN \cite{pvcnn}, and ZKML \cite{zkml}, Mystique \cite{mystique}, ezDPS \cite{ezdps}, focuses on enhancing the compatibility of neural networks with the zk-SNARK backend \cite{snark1, snark2, snark3, snark4, hyrax, snark6}, scaling up non-interactive zero-knowledge inference. Once the committed model is verified to be correctly trained using this work, the verifiable inference can serve as a downstream application. %On the other hand, there has been a lack of attention on the efficient proof of non-arithmetic operations (e.g., ReLU) and the integration of these proofs into the proof system for the entire neural network.

\paragraph{Verifiable machine learning training} VeriML \cite{veriml} serves as an initial endeavour in zero-knowledge verifiable training for core machine learning algorithms. Building upon this, works on provable unlearning \cite{unlearning, unlearning-2} offer proofs of correct training execution and updates in machine unlearning, advancing past the probabilistic verifications in \cite{vul1, vul2, vul3}. Their methodology, however, is restricted to instances where users provide data and fully govern data changes via explicit requests. This restricts proof generation to only data points involved in the aforementioned requests, leaving it ill-suited for our setting where the training data is also considered the intellectual property of the prover. Additionally, the models they support are constrained to limited-scale neural networks while bypassing certain non-arithmetic operations, like replacing ReLU with square activation. Another noteworthy contribution is zkPoT \cite{zkpot}, which, based on MPC-in-the-head \cite{mpc-in-the-head}, offers zero-knowledge verifiable logistic regression.

\paragraph{Proof of learning (PoL) \cite{pol}} PoL serves as a non-cryptographic-based alternative to verifiable training. However, its probabilistic guarantees render it unsuitable for legitimacy-related settings like zkDL \cite{pol-limits, pol-spoof}. Additionally, its threat model assumes adversaries to forge proofs by expending less computation resources than training, which does not deter dedicated malicious prover capable of deviating from the prescribed training logic (e.g., planting backdoors) at the cost of equivalent or additional computational power.

\section{Conclusion}
This paper introduces zkDL, the inaugural specialized zero-knowledge proof system tailored for deep learning training. By harnessing the unique computational structure of deep neural networks that enables a \emph{compressed proof} across data batch, training step and neural network layers, zkDL substantially diminishes the time and communication overheads involved in verifying the genuine execution of deep learning training. Complemented by our pioneering CUDA-based implementation for verifiable deep learning, zkDL adeptly addresses the authenticity concerns related to trained neural networks with \emph{provable security guarantees}. Experimentally, zkDL enables the generation of complete and sound proofs in less than a second per batch update for an 8-layer neural network with 10M parameters and a batch size of 64, while provably ensuring the privacy of data and model parameters. To our best knowledge, we are not aware of any existing work on zero-knowledge proof of deep learning training that is scalable to million-size networks.

\bibliographystyle{plain}
\bibliography{reference.bib}

\begin{thebibliography}{10}

\bibitem{DBLP:journals/corr/abs-2011-05530}
Ramy~E. Ali, Jinhyun So, and Amir~Salman Avestimehr.
\newblock On polynomial approximations for privacy-preserving and verifiable relu networks.
\newblock {\em CoRR}, abs/2011.05530, 2020.

\bibitem{snark6}
Nir Bitansky, Alessandro Chiesa, Yuval Ishai, Rafail Ostrovsky, and Omer Paneth.
\newblock Succinct non-interactive arguments via linear interactive proofs.
\newblock {\em J. Cryptol.}, 35(3):15, 2022.

\bibitem{snark4}
Dan Boneh, Justin Drake, Ben Fisch, and Ariel Gabizon.
\newblock Halo infinite: Recursive zk-{SNARKs} from any additive polynomial commitment scheme.
\newblock {\em {IACR} Cryptol. ePrint Arch.}, page 1536, 2020.

\bibitem{DBLP:journals/eccc/ChiesaFS17}
Alessandro Chiesa, Michael~A. Forbes, and Nicholas Spooner.
\newblock A zero knowledge sumcheck and its applications.
\newblock {\em Electron. Colloquium Comput. Complex.}, {TR17-057}, 2017.

\bibitem{sumcheck}
Graham Cormode, Michael Mitzenmacher, and Justin Thaler.
\newblock Practical verified computation with streaming interactive proofs.
\newblock In Shafi Goldwasser, editor, {\em Innovations in Theoretical Computer Science 2012, Cambridge, MA, USA, January 8-10, 2012}, pages 90--112. {ACM}, 2012.

\bibitem{unlearning}
Thorsten Eisenhofer, Doreen Riepel, Varun Chandrasekaran, Esha Ghosh, Olga Ohrimenko, and Nicolas Papernot.
\newblock Verifiable and provably secure machine unlearning.
\newblock {\em CoRR}, abs/2210.09126, 2022.

\bibitem{pol-limits}
Congyu Fang, Hengrui Jia, Anvith Thudi, Mohammad Yaghini, Christopher~A. Choquette{-}Choo, Natalie Dullerud, Varun Chandrasekaran, and Nicolas Papernot.
\newblock On the fundamental limits of formally (dis)proving robustness in proof-of-learning.
\newblock {\em CoRR}, abs/2208.03567, 2022.

\bibitem{zen}
Boyuan Feng, Lianke Qin, Zhenfei Zhang, Yufei Ding, and Shumo Chu.
\newblock {ZEN:} efficient zero-knowledge proofs for neural networks.
\newblock {\em {IACR} Cryptol. ePrint Arch.}, page~87, 2021.

\bibitem{ec-gpu}
Filecoin.
\newblock ec-gpu.
\newblock \url{https://github.com/filecoin-project/ec-gpu}, 2023.
\newblock Accessed: 2023-10-13.

\bibitem{snark3}
Ariel Gabizon, Zachary~J. Williamson, and Oana Ciobotaru.
\newblock {PLONK:} permutations over lagrange-bases for oecumenical noninteractive arguments of knowledge.
\newblock {\em {IACR} Cryptol. ePrint Arch.}, page 953, 2019.

\bibitem{vul1}
Xiangshan Gao, Xingjun Ma, Jingyi Wang, Youcheng Sun, Bo~Li, Shouling Ji, Peng Cheng, and Jiming Chen.
\newblock Verifi: Towards verifiable federated unlearning.
\newblock {\em CoRR}, abs/2205.12709, 2022.

\bibitem{zkpot}
Sanjam Garg, Aarushi Goel, Somesh Jha, Saeed Mahloujifar, Mohammad Mahmoody, Guru{-}Vamsi Policharla, and Mingyuan Wang.
\newblock Experimenting with zero-knowledge proofs of training.
\newblock {\em {IACR} Cryptol. ePrint Arch.}, page 1345, 2023.

\bibitem{safetynets}
Zahra Ghodsi, Tianyu Gu, and Siddharth Garg.
\newblock Safetynets: Verifiable execution of deep neural networks on an untrusted cloud.
\newblock In Isabelle Guyon, Ulrike von Luxburg, Samy Bengio, Hanna~M. Wallach, Rob Fergus, S.~V.~N. Vishwanathan, and Roman Garnett, editors, {\em Advances in Neural Information Processing Systems 30: Annual Conference on Neural Information Processing Systems 2017, December 4-9, 2017, Long Beach, CA, {USA}}, pages 4672--4681, 2017.

\bibitem{gkr}
Shafi Goldwasser, Yael~Tauman Kalai, and Guy~N. Rothblum.
\newblock Delegating computation: interactive proofs for muggles.
\newblock In Cynthia Dwork, editor, {\em Proceedings of the 40th Annual {ACM} Symposium on Theory of Computing, Victoria, British Columbia, Canada, May 17-20, 2008}, pages 113--122. {ACM}, 2008.

\bibitem{snark2}
Jens Groth.
\newblock On the size of pairing-based non-interactive arguments.
\newblock In Marc Fischlin and Jean{-}S{\'{e}}bastien Coron, editors, {\em Advances in Cryptology - {EUROCRYPT} 2016 - 35th Annual International Conference on the Theory and Applications of Cryptographic Techniques, Vienna, Austria, May 8-12, 2016, Proceedings, Part {II}}, volume 9666 of {\em Lecture Notes in Computer Science}, pages 305--326. Springer, 2016.

\bibitem{vul3}
Chuan Guo, Tom Goldstein, Awni~Y. Hannun, and Laurens van~der Maaten.
\newblock Certified data removal from machine learning models.
\newblock In {\em Proceedings of the 37th International Conference on Machine Learning, {ICML} 2020, 13-18 July 2020, Virtual Event}, volume 119 of {\em Proceedings of Machine Learning Research}, pages 3832--3842. {PMLR}, 2020.

\bibitem{mpc-in-the-head}
Yuval Ishai, Eyal Kushilevitz, Rafail Ostrovsky, and Amit Sahai.
\newblock Zero-knowledge from secure multiparty computation.
\newblock In David~S. Johnson and Uriel Feige, editors, {\em Proceedings of the 39th Annual {ACM} Symposium on Theory of Computing, San Diego, California, USA, June 11-13, 2007}, pages 21--30. {ACM}, 2007.

\bibitem{pol}
Hengrui Jia, Mohammad Yaghini, Christopher~A. Choquette{-}Choo, Natalie Dullerud, Anvith Thudi, Varun Chandrasekaran, and Nicolas Papernot.
\newblock Proof-of-learning: Definitions and practice.
\newblock In {\em 42nd {IEEE} Symposium on Security and Privacy, {SP} 2021, San Francisco, CA, USA, 24-27 May 2021}, pages 1039--1056. {IEEE}, 2021.

\bibitem{zkml}
Daniel Kang, Tatsunori Hashimoto, Ion Stoica, and Yi~Sun.
\newblock Scaling up trustless {DNN} inference with zero-knowledge proofs.
\newblock {\em CoRR}, abs/2210.08674, 2022.

\bibitem{vcnn}
Seunghwa Lee, Hankyung Ko, Jihye Kim, and Hyunok Oh.
\newblock {vCNN}: Verifiable convolutional neural network.
\newblock {\em {IACR} Cryptol. ePrint Arch.}, page 584, 2020.

\bibitem{zkcnn}
Tianyi Liu, Xiang Xie, and Yupeng Zhang.
\newblock {zkCNN}: Zero knowledge proofs for convolutional neural network predictions and accuracy.
\newblock In Yongdae Kim, Jong Kim, Giovanni Vigna, and Elaine Shi, editors, {\em {CCS} '21: 2021 {ACM} {SIGSAC} Conference on Computer and Communications Security, Virtual Event, Republic of Korea, November 15 - 19, 2021}, pages 2968--2985. {ACM}, 2021.

\bibitem{me}
Carsten Lund, Lance Fortnow, Howard~J. Karloff, and Noam Nisan.
\newblock Algebraic methods for interactive proof systems.
\newblock {\em J. {ACM}}, 39(4):859--868, 1992.

\bibitem{snark1}
Bryan Parno, Jon Howell, Craig Gentry, and Mariana Raykova.
\newblock Pinocchio: Nearly practical verifiable computation.
\newblock In {\em 2013 {IEEE} Symposium on Security and Privacy, {SP} 2013, Berkeley, CA, USA, May 19-22, 2013}, pages 238--252. {IEEE} Computer Society, 2013.

\bibitem{DBLP:journals/jacm/Schwartz80}
Jacob~T. Schwartz.
\newblock Fast probabilistic algorithms for verification of polynomial identities.
\newblock {\em J. {ACM}}, 27(4):701--717, 1980.

\bibitem{vul2}
David~Marco Sommer, Liwei Song, Sameer Wagh, and Prateek Mittal.
\newblock Towards probabilistic verification of machine unlearning.
\newblock {\em CoRR}, abs/2003.04247, 2020.

\bibitem{DBLP:conf/crypto/Thaler13}
Justin Thaler.
\newblock Time-optimal interactive proofs for circuit evaluation.
\newblock In Ran Canetti and Juan~A. Garay, editors, {\em Advances in Cryptology - {CRYPTO} 2013 - 33rd Annual Cryptology Conference, Santa Barbara, CA, USA, August 18-22, 2013. Proceedings, Part {II}}, volume 8043 of {\em Lecture Notes in Computer Science}, pages 71--89. Springer, 2013.

\bibitem{hyrax}
Riad~S. Wahby, Ioanna Tzialla, Abhi Shelat, Justin Thaler, and Michael Walfish.
\newblock Doubly-efficient {zkSNARKs} without trusted setup.
\newblock In {\em 2018 {IEEE} Symposium on Security and Privacy, {SP} 2018, Proceedings, 21-23 May 2018, San Francisco, California, {USA}}, pages 926--943. {IEEE} Computer Society, 2018.

\bibitem{ezdps}
Haodi Wang and Thang Hoang.
\newblock {ezDPS}: An efficient and zero-knowledge machine learning inference pipeline.
\newblock {\em Proc. Priv. Enhancing Technol.}, 2023(2):430--448, 2023.

\bibitem{mystique}
Chenkai Weng, Kang Yang, Xiang Xie, Jonathan Katz, and Xiao Wang.
\newblock {Mystique}: Efficient conversions for zero-knowledge proofs with applications to machine learning.
\newblock In Michael Bailey and Rachel Greenstadt, editors, {\em 30th {USENIX} Security Symposium, {USENIX} Security 2021, August 11-13, 2021}, pages 501--518. {USENIX} Association, 2021.

\bibitem{pvcnn}
Jia{-}Si Weng, Jian Weng, Gui Tang, Anjia Yang, Ming Li, and Jia{-}Nan Liu.
\newblock pvcnn: Privacy-preserving and verifiable convolutional neural network testing.
\newblock {\em {IEEE} Trans. Inf. Forensics Secur.}, 18:2218--2233, 2023.

\bibitem{unlearning-2}
Jia{-}Si Weng, Shenglong Yao, Yuefeng Du, Junjie Huang, Jian Weng, and Cong Wang.
\newblock Proof of unlearning: Definitions and instantiation.
\newblock {\em CoRR}, abs/2210.11334, 2022.

\bibitem{libra}
Tiancheng Xie, Jiaheng Zhang, Yupeng Zhang, Charalampos Papamanthou, and Dawn Song.
\newblock Libra: Succinct zero-knowledge proofs with optimal prover computation.
\newblock In Alexandra Boldyreva and Daniele Micciancio, editors, {\em Advances in Cryptology - {CRYPTO} 2019 - 39th Annual International Cryptology Conference, Santa Barbara, CA, USA, August 18-22, 2019, Proceedings, Part {III}}, volume 11694 of {\em Lecture Notes in Computer Science}, pages 733--764. Springer, 2019.

\bibitem{orion}
Tiancheng Xie, Yupeng Zhang, and Dawn Song.
\newblock Orion: Zero knowledge proof with linear prover time.
\newblock In Yevgeniy Dodis and Thomas Shrimpton, editors, {\em Advances in Cryptology - {CRYPTO} 2022 - 42nd Annual International Cryptology Conference, {CRYPTO} 2022, Santa Barbara, CA, USA, August 15-18, 2022, Proceedings, Part {IV}}, volume 13510 of {\em Lecture Notes in Computer Science}, pages 299--328. Springer, 2022.

\bibitem{DBLP:conf/ccs/ZhangLWZSXZ21}
Jiaheng Zhang, Tianyi Liu, Weijie Wang, Yinuo Zhang, Dawn Song, Xiang Xie, and Yupeng Zhang.
\newblock Doubly efficient interactive proofs for general arithmetic circuits with linear prover time.
\newblock In Yongdae Kim, Jong Kim, Giovanni Vigna, and Elaine Shi, editors, {\em {CCS} '21: 2021 {ACM} {SIGSAC} Conference on Computer and Communications Security, Virtual Event, Republic of Korea, November 15 - 19, 2021}, pages 159--177. {ACM}, 2021.

\bibitem{pol-spoof}
Rui Zhang, Jian Liu, Yuan Ding, Zhibo Wang, Qingbiao Wu, and Kui Ren.
\newblock Adversarial examples for proof-of-learning.
\newblock In {\em 43rd {IEEE} Symposium on Security and Privacy, {SP} 2022, San Francisco, CA, USA, May 22-26, 2022}, pages 1408--1422. {IEEE}, 2022.

\bibitem{veriml}
Lingchen Zhao, Qian Wang, Cong Wang, Qi~Li, Chao Shen, and Bo~Feng.
\newblock Veriml: Enabling integrity assurances and fair payments for machine learning as a service.
\newblock {\em {IEEE} Trans. Parallel Distributed Syst.}, 32(10):2524--2540, 2021.

\bibitem{DBLP:conf/eurosam/Zippel79}
Richard Zippel.
\newblock Probabilistic algorithms for sparse polynomials.
\newblock In Edward~W. Ng, editor, {\em Symbolic and Algebraic Computation, {EUROSAM} '79, An International Symposiumon Symbolic and Algebraic Computation, Marseille, France, June 1979, Proceedings}, volume~72 of {\em Lecture Notes in Computer Science}, pages 216--226. Springer, 1979.

\end{thebibliography}

\appendices

\section{zkReLU} \label{appendix:zkrelu}

In this appendix, we delve into the specifics of zkReLU. The foundational requirement of zkReLU, stemming from the sumcheck-protocol-based tensor operations discussed in Section \ref{sec:ac}, encompasses the declared evaluations on both the input and output of ReLU's forward and backward propagations. Specifically, these are \(\me{\mathbf{A}}{\mathbf{u_A}}, \me{\mathbf{Z}}{\mathbf{u_Z}}, \me{\mathbf{G_Z}}{\mathbf{u_{G_Z}}}, \me{\mathbf{G_A}}{\mathbf{u_{G_A}}}\).

The sumcheck protocol for the forward propagation, denoted as \eqref{eq:aux-Z} and \eqref{eq:aux-A}, is given by
\begin{gather}
    \me{\mathbf{Z}}{\mathbf{u_Z}} = \sum_{\mathbf{i} = 0}^{D-1} \sum_{\mathbf{j} = 0}^{Q+R-1}\me{\beta}{\mathbf{u_Z}, \mathbf{i}}\me{\mathbf{aux}}{0, \mathbf{i}, \mathbf{j}}\me{\mathbf{s}_{Q+R}}{\mathbf{j}},\label{eq:zkrelu-forward-in-sc}\\
    \me{\mathbf{A}}{\mathbf{u_A}} = \sum_{\mathbf{i} = 0}^{D-1} \sum_{\mathbf{j} = 0}^{Q+R-1}\me{\beta}{\mathbf{u_A}, \mathbf{i}}\left(1-\me{\mathbf{aux}}{0, \mathbf{i}, Q+R-1}\right)\me{\mathbf{aux}}{0, \mathbf{i}, \mathbf{j}}\me{\mathbf{s}'}{\mathbf{j}},\label{eq:zkrelu-forward-out-sc}
\end{gather}
where 
\[\mathbf{s'} := \left(0, 0, 0, \dots, 1, 1, 2, 4, \dots 2^{Q-2}, -2^{Q-1}\right)^\top.\]
Additionally, the AIVP is expressed as
\begin{equation}
    0 = \sum_{\mathbf{i} = 0}^{D-1}\sum_{\mathbf{j} = 0}^{Q+R-1} \me{\beta}{\mathbf{u_\text{bin}}, \mathbf{i}\oplus \mathbf{j}} \left(\me{\mathbf{aux}}{0, \mathbf{i}, \mathbf{j}}^2 - \me{\mathbf{aux}}{0, \mathbf{i}, \mathbf{j}}\right),\label{eq:zkrelu-forward-bin-sc}
\end{equation}
with $\mathbf{u}_\text{bin}\sim\F^{\ceil{\log_2D} + \ceil{\log_2 (Q+R)}}$ being uniformly randomly drawn by the prover.

Analogously, mirroring the construction above for backpropagation: \begin{gather}
    \me{\mathbf{G_A}}{\mathbf{u_{G_A}}} = \sum_{\mathbf{i} = 0}^{D-1} \sum_{\mathbf{j} = 0}^{Q+R-1}\me{\beta}{\mathbf{u_{G_A}}, \mathbf{i}}\me{\mathbf{aux}}{1, \mathbf{i}, \mathbf{j}}\me{\mathbf{s}_{Q+R}}{\mathbf{j}},\label{eq:zkrelu-backward-in-sc}\\
    \me{\mathbf{G_Z}}{\mathbf{u_{G_Z}}} = \sum_{\mathbf{i} = 0}^{D-1} \sum_{\mathbf{j} = 0}^{Q+R-1}\me{\beta}{\mathbf{u_{G_Z}}, \mathbf{i}}\left(1-\me{\mathbf{aux}}{0, \mathbf{i}, Q+R-1}\right)\me{\mathbf{aux}}{1, \mathbf{i}, \mathbf{j}}\me{\mathbf{s}'}{\mathbf{j}},\label{eq:zkrelu-backward-out-sc}\\
    0 = \sum_{\mathbf{i} = 0}^{D-1}\sum_{\mathbf{j} = 0}^{Q+R-1} \me{\beta}{\mathbf{u_\text{bin}}, \mathbf{i}\oplus \mathbf{j}} \left(\me{\mathbf{aux}}{1, \mathbf{i}, \mathbf{j}}^2 - \me{\mathbf{aux}}{1, \mathbf{i}, \mathbf{j}}\right).\label{eq:zkrelu-backward-bin-sc}
\end{gather}

To optimize the verification process, the verifier introduces randomness \( r, r' \sim \F \) to compress the six sumcheck equations, from \eqref{eq:zkrelu-forward-in-sc} to \eqref{eq:zkrelu-backward-bin-sc}. Each equation is multiplied by weights \( r^2, r, 1, r'r^2, r'r, \) and \( r' \) respectively, and subsequently aggregated. This compression technique leads to a succinct proof representation requiring only \( 3\log_2\left(D(Q+R)\right) + O(1) \) field elements.

This procedure creates three intermediate claims about \( \mathbf{aux} \): \( \me{\mathbf{aux}}{0, \mathbf{v}, \mathbf{w}} \), \( \me{\mathbf{aux}}{1, \mathbf{v}, \mathbf{w}} \), and \( \me{\mathbf{aux}}{0, \mathbf{v}, Q+R-1} \) for $\mathbf{v}\sim \F^{\log_2D}, \mathbf{w}^{\log_2 \left(Q+R\right)}$ chosen due to the randomness during the execution of the sumcheck. However, they can be merged into a singular claim, further reducing the proof size to \( 2\log_2 (Q+R) + O(1) \). Consequently, the entire zkReLU proof compression is succinctly captured as \( 3\log_2 D + 5\log_2 (Q+R) + O(1) \) field elements, with one additional proof of evaluation on \( \mathbf{aux} \).

\section{FAC4DNN} \label{appendix:ac}

The aggregated proof for tensor operations, specifically the sumcheck on Equation \eqref{eq:tensor-op-aggr}, follows Protocol \ref{protocol:tensor-op-aggr}. In the same vein, the aggregation of specialized optimized sumcheck protocols, as given by Equation \eqref{eq:tensor-op-aggr-optimized}, adheres to Protocol \ref{protocol:tensor-op-aggr-optimized}. In both instances, compression begins over the additional axis resulting from stacking. This process methodically whittles down the proof over the stacked tensor to a singular tensor as shown in \eqref{eq:tensor-op-compressed} and \eqref{eq:tensor-op-compressed-optimized}. Subsequently, the sumcheck protocol designed for individual tensor operations is directly implemented.

\begin{protocol*}
    \caption{Sumcheck on Equation \eqref{eq:tensor-op-aggr}}
    \label{protocol:tensor-op-aggr}
    \begin{algorithmic}[1]
        \Require Prover $\mathcal{P}$, verifier $\mathcal{V}$, $N, D_\text{in}, D_\text{out}$ are powers of $2$ (zero-padding may be applied otherwise)
        \State Denote $n:= \log_2 N, d_\text{in} := \log_2 D_\text{in}, d_\text{out} := \log_2 D_\text{out}$
        \State $\mathcal{P}$ sends to verifier the uni-variate polynomial \begin{equation}
            f_0(v) := \sum_{\mathbf{n}'\in \{0, 1\}^{n-1}} \sum_{\mathbf{i} =0}^{D_\text{out}-1}\sum_{\mathbf{j} = 0}^{D_\text{in}-1}\me{\beta}{\mathbf{w}_{[1:n]}, \mathbf{n}'}\me{\beta}{\mathbf{u}, \mathbf{i}}\left(D_\text{in}^{-1}\me{\mathbf{Y}}{v, \mathbf{n}', \mathbf{i}} - 
            f\left(\me{\mathbf{X}_k}{v, \mathbf{n}', \mathbf{i}_{I_k}, \mathbf{j}_{J_k}}\right)_{k=0}^{K-1}\right)
        \end{equation}
        \State $\mathcal{V}$ computes $g_0(v)\gets \me{\beta}{\mathbf{w}_{[0]}, v} f_0(v)$, and checks $g_0(0) + g_0(1) = 0$
        \State $\mathcal{V}$ sends to $\mathcal{P}$ a uniform random $v_0\sim \F$
        \For{$t\gets 1, 2, \dots, n-1$}
            \Comment{Denote $\mathbf{v}_t:= \left(v_0, v_1, \dots, v_{t-1}\right)^\top$}
            \State $\mathcal{P}$ sends to $\mathcal{V}$ the uni-variate polynomial \begin{equation}
                f_t(v) := \sum_{\mathbf{n}'\in \{0, 1\}^{n-t-1}} \sum_{\mathbf{i} =0}^{D_\text{out}-1}\sum_{\mathbf{j} = 0}^{D_\text{in}-1}\me{\beta}{ \mathbf{w}_{[t+1:n]}, \mathbf{n}'}\me{\beta}{\mathbf{u}, \mathbf{i}}\left(D_\text{in}^{-1}\me{\mathbf{Y}}{\mathbf{v}_t, v, \mathbf{n}', \mathbf{i}} - 
                f\left(\me{\mathbf{X}_k}{\mathbf{v}_t, v, \mathbf{n}', \mathbf{i}_{I_k}, \mathbf{j}_{J_k}}\right)_{k=0}^{K-1}\right)
        \end{equation}
        \State $\mathcal{V}$ computes $g_t(v)\gets \me{\beta}{\mathbf{w}_{[t]}, v} f_t(v)$, and checks $g_t(0) + g_t(1) = f_{t-1}(v_{t-1})$
        \State $\mathcal{V}$ sends to $\mathcal{P}$ a uniform random $v_t\sim \F$
        \EndFor
        \State $\mathcal{P}$ and $\mathcal{V}$ execute the sumcheck protocol on \label{alg-line:tensor-op-compressed} \begin{equation}
            f_{n-1}(v_{n-1}) = \sum_{\mathbf{i} =0}^{D_\text{out}-1}\sum_{\mathbf{j} = 0}^{D_\text{in}-1}\me{\beta}{\mathbf{u}, \mathbf{i}}\left(D_\text{in}^{-1}\me{\mathbf{Y}}{\mathbf{v}_{n-1}, \mathbf{i}} - 
                f\left(\me{\mathbf{X}_k}{\mathbf{v}_{n-1}, \mathbf{i}_{I_k}, \mathbf{j}_{J_k}}\right)_{k=0}^{K-1}\right) \label{eq:tensor-op-compressed}
        \end{equation}
    \end{algorithmic}
\end{protocol*}

\begin{protocol*}
    \caption{Sumcheck on Equation \eqref{eq:tensor-op-aggr-optimized}}
    \label{protocol:tensor-op-aggr-optimized}
    \begin{algorithmic}[1]
        \Require Prover $\mathcal{P}$, verifier $\mathcal{V}$, $N, D_\text{in}, D_\text{out}$ are powers of $2$ (zero-padding may be applied otherwise)
        \State Denote $n:= \log_2 N, d_\text{in} := \log_2 D_\text{in}, d_\text{out} := \log_2 D_\text{out}$ 
        \State $\mathcal{P}$ sends to verifier the uni-variate polynomial \begin{equation}
            f_0(v) := \sum_{\mathbf{n}'\in \{0, 1\}^{n-1}} \sum_{\mathbf{j}=0}^{D_\text{in}-1}\me{\beta}{\mathbf{w}_{[1:n]}, \mathbf{n}'}\me{\beta}{\mathbf{u}_{J_\beta}, \mathbf{j}_{J_\beta}}f\left(\me{\mathbf{X}_k}{v, \mathbf{n}', \mathbf{u}_{I_k}, \mathbf{j}_{J_k}}\right)_{k=0}^{K-1}
        \end{equation}
        \State $\mathcal{V}$ computes $g_0(v)\gets \me{\beta}{\mathbf{w}_{[0]}, v} f_0(v)$, and checks $g_0(0) + g_0(1) = \me{\mathbf{Y}}{\mathbf{w}, \mathbf{u}}$
        \State $\mathcal{V}$ sends to $\mathcal{P}$ a uniform random $v_0\sim \F$
        \For{$t\gets 1, 2, \dots, n-1$}
            \Comment{Denote $\mathbf{v}_t:= \left(v_0, v_1, \dots, v_{t-1}\right)^\top$}
            \State $\mathcal{P}$ sends to $\mathcal{V}$ the uni-variate polynomial \begin{equation}
                f_t(v) := \sum_{\mathbf{n}'\in \{0, 1\}^{n-t-1}} \sum_{\mathbf{j}=0}^{D_\text{in}-1}\me{\beta}{\mathbf{w}_{[1:n]}, \mathbf{n}'}\me{\beta}{\mathbf{u}_{J_\beta}, \mathbf{j}_{J_\beta}}f\left(\me{\mathbf{X}_k}{\mathbf{v}_t, v, \mathbf{n}', \mathbf{u}_{I_k}, \mathbf{j}_{J_k}}\right)_{k=0}^{K-1}
        \end{equation}
        \State $\mathcal{V}$ computes $g_t(v)\gets \me{\beta}{\mathbf{w}_{[t]}, v} f_t(v)$, and checks $g_t(0) + g_t(1) = f_{t-1}(v_{t-1})$
        \State $\mathcal{V}$ sends to $\mathcal{P}$ a uniform random $v_t\sim \F$
        \EndFor
        \State $\mathcal{P}$ and $\mathcal{V}$ execute the sumcheck protocol on \label{alg-line:tensor-op-compressed-optimized} \begin{equation}
            f_{n-1}(v_{n-1}) = \sum_{\mathbf{j}=0}^{D_\text{in}-1}\me{\beta}{\mathbf{u}_{J_\beta}, \mathbf{j}_{J_\beta}}f\left(\me{\mathbf{X}_k}{\mathbf{v}_{n-1}, \mathbf{u}_{I_k}, \mathbf{j}_{J_k}}\right)_{k=0}^{K-1} \label{eq:tensor-op-compressed-optimized}
        \end{equation}
    \end{algorithmic}
\end{protocol*}

\section{Security analysis} \label{appendix:pet}

\subsection{Completeness}\label{appendix:completeness} %The completeness, as formulated in Theorem \ref{thm:completeness}, can be proved as:
\begin{proof}[Proof sketch of Theorem \ref{thm:completeness}]
    Note that there are three scenarios in which a semi-honest verifier might reject a proof: the sumchecks in Lines \ref{alg-line:zkdl-sumcheck}, \ref{alg-line:zkdl-reindex}, and the proof of evaluations in Line \ref{alg-line:zkdl-open}. 

    For the sumcheck concerning aggregated tensor operations in Line \ref{alg-line:zkdl-sumcheck}, as specified in Protocol \ref{protocol:tensor-op-aggr} or \ref{protocol:tensor-op-aggr-optimized}, regardless of which is invoked, the polynomials \(f_t\), extended by the verifier to \(g_t\), ensure that the sum \(g_t(0)+g_t(1)\) comprehensively retrieves the sum from the preceding round. As such, given that the prover adheres to the protocol, it consistently stands that \(g_t(0) + g_t(1)\) matches the preceding claim. This ensures that \(f_t\) is accepted by the verifier in round \(t\) with probability 1. Lastly, owing to the perfect completeness of the sumchecks for singular tensor operations, the verifier will also unconditionally accept the proof presented by an honest prover. Once an honest prover has successfully navigated the prior \(n\) rounds and reduced the \(N\) operations to one, the probability that the sumcheck protocol for this isolated operation also fails is zero.

    Moreover, Equation \eqref{eq:sc-reindex} presents as the inner product of two vectors with length \(N-1\), specifically the public input \(\left(\sum_{k=0}^{K-1}\sum_{j=0}^{N_k-1} r_k\me{\beta}{\mathbf{u}_k, j}p_k(i, j)\right)_{i=0}^{N-1}\) and the private input \(\left(\me{\mathbf{X}}{i, \mathbf{u}}\right)_{i=0}^{N-1}\). As a result, the completeness in this phase arises from the perfect completeness of the sumcheck for vector inner-products.

    Finally, due to the perfect completeness of the proof of evaluations, and given the declared evaluations — that is, \(\me{\mathbf{S}_i^{(t)}}{\mathbf{u}_i^{(t)}}\) — are correct, the semi-honest verifier accepts the proof of evaluations on all stacked tensors with certainty in Line \ref{alg-line:zkdl-open}.
\end{proof}

\subsection{Soundness}\label{appendix:soundness} The proof of soundness (Theorem \ref{thm:soundness}) relies on Lemma \ref{lem:soundness-zkrelu} that captures the soundness of zkReLU: 

\begin{Lem}[Soundness of zkReLU] \label{lem:soundness-zkrelu}
    If the equalities \eqref{eq:aux-Z}, \eqref{eq:aux-GA}, \eqref{eq:aux-A}, \eqref{eq:aux-GZ} and \eqref{eq:aux-bin} hold, then the forward and backward propagation of the ReLU activation is correctly computed as: \begin{align}
            \label{eq:soundness-zkrelu-forward}\mathbf{A} &=  \left\lfloor\frac{\1\left\{\mathbf{Z}\geq 0\right\}\odot\mathbf{Z}}{2^R}\right\rceil, \\
            \label{eq:soundness-zkrelu-backward}\mathbf{G}_\mathbf{Z} &=  \left\lfloor\frac{\1\left\{\mathbf{Z} \geq 0\right\}\odot\mathbf{G_A}}{2^R}\right\rceil.
        \end{align}
\end{Lem}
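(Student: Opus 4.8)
\textbf{Proof proposal for Lemma \ref{lem:soundness-zkrelu}.}

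The plan is to decode the binary tensor $\mathbf{aux}$ and show that, once the five stated equalities hold, every intermediate quantity reconstructed from $\mathbf{aux}$ actually coincides with the honestly computed quantity, from which \eqref{eq:soundness-zkrelu-forward} and \eqref{eq:soundness-zkrelu-backward} follow by algebra. First I would use \eqref{eq:aux-bin} to conclude that every entry of $\mathbf{aux}$ lies in $\{0,1\}$, so the row-vectors $\mathbf{aux}_{[0,i,:]}$ and $\mathbf{aux}_{[1,i,:]}$ are genuine bit-strings of length $Q+R$. Then \eqref{eq:aux-Z} and \eqref{eq:aux-GA} say precisely that $\mathbf{Z}$ and $\mathbf{G_A}$ are the signed $(Q+R)$-bit integers these bit-strings encode (via $\mathbf{s}_{Q+R}$), and in particular $\mathbf{Z},\mathbf{G_A}\in[-2^{Q+R-1},2^{Q+R-1})^D$ entrywise; moreover $\mathbf{aux}_{[0,:,Q+R-1]}$ is exactly the sign bit of $\mathbf{Z}$, i.e.\ $\mathbf{aux}_{[0,:,Q+R-1]}=\1\{\mathbf{Z}<0\}$.

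Next I would invoke the splitting of $\mathbf{s}_{Q+R}$ into its low $R$ bits and high $Q$ bits to establish the rounding identities. Writing $\mathbf{Z}'=\mathbf{aux}_{[0,:,R:Q+R]}\mathbf{s}_Q+\mathbf{aux}_{[0,:,R-1]}$ and $\mathbf{R_Z}=\mathbf{aux}_{[0,:,0:R]}\mathbf{s}_R$, a short computation with powers of two shows $\mathbf{Z}=2^R\mathbf{Z}'+\mathbf{R_Z}$ (this is \eqref{eq:zkrelu-Z}, which the text already notes is enforced by the decomposition), that $\mathbf{R_Z}\in[-2^{R-1},2^{R-1})^D$ because it is an $R$-bit signed integer plus the rounding correction from bit $R-1$, and hence $\mathbf{Z}'=\lfloor\mathbf{Z}/2^R\rceil$. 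The same argument applied to the index-$1$ slab gives $\mathbf{G_A}'=\lfloor\mathbf{G_A}/2^R\rceil$. I would then read off \eqref{eq:aux-A} and \eqref{eq:aux-GZ}: since $1-\mathbf{aux}_{[0,:,Q+R-1]}=\1\{\mathbf{Z}\geq 0\}$, they state $\mathbf{A}=\1\{\mathbf{Z}\geq 0\}\odot\mathbf{Z}'=\1\{\mathbf{Z}\geq 0\}\odot\lfloor\mathbf{Z}/2^R\rceil$ and $\mathbf{G_Z}=\1\{\mathbf{Z}\geq 0\}\odot\mathbf{G_A}'=\1\{\mathbf{Z}\geq 0\}\odot\lfloor\mathbf{G_A}/2^R\rceil$. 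Finally I would note that $\1\{\mathbf{Z}\geq 0\}\odot\lfloor\mathbf{Z}/2^R\rceil=\lfloor(\1\{\mathbf{Z}\geq 0\}\odot\mathbf{Z})/2^R\rceil$ because $\lfloor 0/2^R\rceil=0$, and likewise for the gradient, yielding exactly \eqref{eq:soundness-zkrelu-forward} and \eqref{eq:soundness-zkrelu-backward}.

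The main obstacle I anticipate is the careful bookkeeping in the rounding step: one must check that the "$+\,\mathbf{aux}_{[\cdot,:,R-1]}$" correction term indeed implements round-half-up (or whatever rounding convention $\lfloor\cdot\rceil$ denotes) consistently with how $\mathbf{R_Z}$ is defined, and that the sign bit of the full $(Q+R)$-bit integer really equals the sign bit relevant to ReLU — i.e.\ that restricting $\mathbf{Z}'$ to $Q$ bits does not create a sign mismatch between $\mathbf{Z}$ and $\lfloor\mathbf{Z}/2^R\rceil$. This is where the hypothesis that the unscaled values are genuinely $(Q+R)$-bit integers (so that no overflow occurs and the top bit is a true sign bit) is used. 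Everything else is a routine expansion of the multilinear-extension index slices into concrete bit-vector arithmetic. I would also remark that this lemma is the combinatorial core of Theorem \ref{thm:soundness}: once zkReLU's soundness is reduced to "the five equalities hold," the soundness of the aggregated sumchecks and the binding property of the Pedersen/Hyrax commitments propagate it to the full protocol.
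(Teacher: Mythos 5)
Your proposal is correct and follows essentially the same route as the paper's proof: use \eqref{eq:aux-bin} to certify that $\mathbf{aux}$ is binary, decode the sign bit and the $Q{+}R$-bit integer value of $\mathbf{Z}$ (and $\mathbf{G_A}$) from \eqref{eq:aux-Z}--\eqref{eq:aux-GA}, establish the rounding identity for $\mathbf{Z}'$ and $\mathbf{G_A}'$, and then read \eqref{eq:aux-A}, \eqref{eq:aux-GZ} off as the desired equations. The only cosmetic difference is bookkeeping: you pass through the explicit remainder decomposition $\mathbf{Z}=2^R\mathbf{Z}'+\mathbf{R_Z}$ with $\mathbf{R_Z}\in[-2^{R-1},2^{R-1})^D$, whereas the paper's appendix pulls the low $R$ bits into a fractional part of $\mathbf{Z}/2^R$ and argues the $+\mathbf{aux}_{[0,:,R-1]}$ correction equals the rounding of that fraction -- both yield the same conclusion.
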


\begin{proof}[Proof of Lemma \ref{lem:soundness-zkrelu}]
    Consider first the correctness of \eqref{eq:aux-bin}. This ensures that $\mathbf{aux} \in \{0, 1\}^{2\times D\times (Q+R)}$. Following from \eqref{eq:aux-Z}, we derive \begin{equation}
        \sum_{j=0}^{Q+R-2}2^j\mathbf{aux}_{[0, :, j]} - 2^{Q+R-1} \mathbf{aux}_{[0, :, Q+R-1]} = \mathbf{Z},\label{eq:aux-Z-decomp}
    \end{equation}
    where the first term on the left-hand side of \eqref{eq:aux-Z-decomp} has a bound between $0$ and $2^{Q+R-1} -1$. Consequently, the second term defines the sign of $\mathbf{Z}$, giving $\1\left\{\mathbf{Z}\geq 0\right\} = 1 - \mathbf{aux}_{[0, Q+R-1, :]}$. Applying a similar logic to \eqref{eq:aux-GA}, we deduce that $\1\left\{\mathbf{G_A}\geq 0\right\} = 1- \mathbf{aux}_{[1, Q+R-1, :]}$. 
    
    For the rescaling of $\mathbf{Z}$, we can represent it as: \begin{equation}
        \round{\frac{\mathbf{Z}}{2^R}} = \round{\frac{\sum_{j=0}^{R-1}2^j \mathbf{aux}_{[0,:,j]}}{2^R}} + \sum_{j=0}^{Q-1}2^j \mathbf{aux}_{[0,:,R+j]} - 2^Q\mathbf{aux}_{[0,:,Q+R-1]}.
    \end{equation}
    For each dimension denoted by $i$, the first term on the right-hand side is $1$ precisely when $\mathbf{aux}_{[0,i,R-1]} = 1$. Thus, it is valid that $\round{\frac{\mathbf{Z}}{2^R}} = \mathbf{aux}_{[0, :, R:Q+R]}\mathbf{s}_{Q} + \mathbf{aux}_{[0, :, R-1]}$. From \eqref{eq:aux-A}, we derive:
    \begin{equation}
        \mathbf{A} = \1\left\{\mathbf{Z}\geq 0\right\}\odot \round{\frac{\mathbf{Z}}{2^R}} = \round{\frac{\1\left\{\mathbf{Z}\geq 0\right\}\odot\mathbf{Z}}{2^R}},
    \end{equation}
    which leads to \eqref{eq:soundness-zkrelu-forward}. Using the same reasoning on \eqref{eq:aux-GZ} confirms the validity of \eqref{eq:soundness-zkrelu-backward}.
\end{proof}

Lemma \ref{lem:soundness-zkrelu} asserts that, through the integration of $\mathbf{aux}$ and its corresponding auxiliary components, the correctness of ReLU can be equated to the correctness of conventional arithmetic tensor operations. This enables aggregation across layers and training iterations within Protocol \ref{protocol:zkdl}. Consequently, for soundness, it is only necessary to ensure the correctness of all tensor operations.

\begin{proof}[Proof of Theorem \ref{thm:soundness}]
    A malicious prover may attempt to cheat in Protocol \ref{protocol:zkdl} specifically at Lines \ref{alg-line:zkdl-sumcheck}, \ref{alg-line:zkdl-reindex}, and \ref{alg-line:zkdl-open}. We investigate the implications in the reverse order of these steps.

    Beginning with the proof of evaluation in Line \ref{alg-line:zkdl-open}, the commitment scheme's binding properties ensure a soundness error of \(\negl{\lambda}\). Thus, if the prover attempts to falsely claim that \(v\) is equivalent to \(\me{\mathbf{S}_i^{(t)}}{\mathbf{u}_i^{(t)}}\) for any \(v\neq \me{\mathbf{S}_i^{(t)}}{\mathbf{u}_i^{(t)}}\), the verification passes with a probability no greater than \(\negl{\lambda}\).
    
    Next, focusing on Line \ref{alg-line:zkdl-sumcheck}, we consider the case where the prover begins with a false assertion regarding at least one \(\me{\mathbf{X}_k}{\mathbf{u}_k, \mathbf{u}}\) in the sumcheck of \eqref{eq:sc-reindex} but ultimately claims accuracy for the stacked tensor \(\mathbf{X}\). Here, the randomness of \(r_k\)s implies that \eqref{eq:sc-reindex} is likely not satisfied with a probability of \(1 - \frac{1}{\abs{\F}}\). Given this, the failure probability of the sumcheck on \eqref{eq:sc-reindex} is \(1 - O\left(\frac{\log N}{\abs{\F}}\right)\), making the overall deceptive success probability \(\negl{\lambda}\).
    
    Lastly, for any incorrectly computed tensor operation, consider its contribution to the sumcheck protocol \eqref{eq:tensor-op-aggr} observed in Line \ref{alg-line:zkdl-sumcheck}. Consequently, a pair of indices \(\mathbf{n}, \mathbf{i}\) exists for which
    \begin{equation}
        \me{\mathbf{Y}}{\mathbf{n}, \mathbf{i}} \neq \sum_{\mathbf{j} = 0}^{D_\text{in}-1}\me{\beta}{\mathbf{u}, \mathbf{i}} f\left(\me{\mathbf{X}_k}{\mathbf{n}, \mathbf{i}_{I_k}, \mathbf{j}_{J_k}}\right)_{k=0}^{K-1}.
    \end{equation}
    By applying the Schwartz-Zippel Lemma, it is deduced that the equality in \eqref{eq:tensor-op-aggr} cannot be maintained with a probability of \(1-O\left(\frac{\log\left(ND_\text{in}\right)}{\abs{\F}}\right)\). In such scenarios, the sumcheck has an upper bound success rate of \(O\left(\frac{\log\left(ND_\text{in}D_\text{out}\right)}{\abs{\F}}\right)\). Thus, the probability that a deceitful prover triumphs during the sumcheck for combined tensor operations is similarly \(\negl{\lambda}\).
\end{proof} 

\subsection{Zero-knowledge} \label{appendix:zero-knowledge}

Theorem \ref{thm:zero-knowledge} captures the zero-knowledge properties of zkDL, i.e., the execution of zkDL leaks no information about the training data and model parameters:

\begin{Thm}[Zero-knowledge]\label{thm:zero-knowledge}
    Assuming the implementation of the zero-knowledge Pedersen commitment scheme and the zero-knowledge variant of the sumcheck protocol \cite{DBLP:journals/eccc/ChiesaFS17, libra, orion}, Protocol \ref{protocol:zkdl} is zero-knowledge. Specifically, there exists a simulator \(\mathcal{S} = (\mathcal{S}_1, \mathcal{S}_2)\) for which the ensuing two views are computationally indistinguishable by any probabilistic polynomial-time (PPT) algorithm, given the public parameters $\tt{pp}$ (the generators used in the commitment scheme within the context of zkDL):

    \begin{mdframed}[backgroundcolor=white,linewidth=1pt,roundcorner=5pt]
    \noindent\textbf{Real:}
      \begin{algorithmic}[1]
        \State \(\com{\,} \leftarrow \tt{zkDL-Commit}\left(\mathbf{X} \|\mathbf{y}\|\mathbf{W}_\text{init}; \tt{pp}\right)\)
        \State \(\pi \leftarrow \tt{zkDL-Prove}(\tt{com}; \tt{pp})\)
        \State \Return \(\com{\,}, \pi\)
      \end{algorithmic}
    \end{mdframed}
    
    \begin{mdframed}[backgroundcolor=white,linewidth=1pt,roundcorner=5pt]
    \noindent\textbf{Ideal:}
      \begin{algorithmic}[1]
        \State \(\com{\,} \leftarrow \mathcal{S}_1\left(1^\lambda; \tt{pp}\right)\)
        \State \(\pi \leftarrow \mathcal{S}_2(\com{\,}; \tt{pp})\), with oracle access to the correctness of the training procedure
        \State \Return \(\com{\,}, \pi\)
      \end{algorithmic}
    \end{mdframed}
    
    In the aforementioned setting, \(\tt{zkDL-Commit}\left(\mathbf{X} \|\mathbf{y}\|\mathbf{W}_\text{init}; \tt{pp}\right)\) pertains to the steps of training (Line \ref{alg-line:zkdl-train}) and making commitments to the tensors (Line \ref{alg-line:zkdl-commit}) in Protocol \ref{protocol:zkdl}. Meanwhile, \(\tt{zkDL-Prove}(\tt{com}; \tt{pp})\) refers to the processes of the sumcheck protocols (Lines \ref{alg-line:zkdl-sumcheck} and \ref{alg-line:zkdl-reindex}) and the proof of evaluations relative to the commitments (Line \ref{alg-line:zkdl-open}).
\end{Thm}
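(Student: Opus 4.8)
The plan is to construct the simulator $\mathcal{S} = (\mathcal{S}_1, \mathcal{S}_2)$ in a modular fashion, mirroring the two-phase split of the protocol into $\tt{zkDL-Commit}$ and $\tt{zkDL-Prove}$, and then argue computational indistinguishability by a standard hybrid argument. For $\mathcal{S}_1$: the real commitments $\com{i}^{(t)}$ are Pedersen commitments to the stacked tensors $\mathbf{S}_i^{(t)}$, each blinded by a uniformly random $r \sim \F$. By the hiding property of the Pedersen scheme, $h^r \mathbf{g}^{\mathbf{S}}$ is (under DLP / for PPT distinguishers, perfectly in the information-theoretic variant) indistinguishable from $h^r \mathbf{g}^{\mathbf{0}}$. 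So $\mathcal{S}_1(1^\lambda; \tt{pp})$ simply outputs commitments to all-zero tensors of the correct dimensions (the dimensions are public, being determined by the agreed network architecture and $T, T'$). For $\mathcal{S}_2$: it is given $\com{\,}$ and oracle access to the statement ``the training was executed correctly,'' and must produce a transcript $\pi$ consisting of (i) the sumcheck transcripts of Lines~\ref{alg-line:zkdl-sumcheck} and~\ref{alg-line:zkdl-reindex}, and (ii) the proof-of-evaluation transcripts of Line~\ref{alg-line:zkdl-open}, together with the verifier's random coins.

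For part (ii), I would invoke the zero-knowledge property of the Hyrax proof of evaluation: there exists a simulator $\mathcal{S}_\text{eval}$ that, given only the commitment $\com{i}^{(t)}$, the evaluation point $\mathbf{u}_i^{(t)}$, and the claimed value $\me{\mathbf{S}_i^{(t)}}{\mathbf{u}_i^{(t)}}$, produces an indistinguishable transcript. The subtlety is that $\mathcal{S}_2$ does not know the true evaluation values; here I would use the oracle: since the statement is true, the consistent evaluation claims are exactly those produced by the (zero-knowledge) sumcheck simulators, which output fresh random field elements as the intermediate claims — so $\mathcal{S}_2$ can pick the evaluation points and claimed values itself as part of simulating the sumchecks, then feed them to $\mathcal{S}_\text{eval}$. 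For part (i), I would invoke the zero-knowledge variant of the sumcheck protocol \cite{DBLP:journals/eccc/ChiesaFS17, libra, orion}: each round message is (after the ZK masking) indistinguishable from what a simulator can produce knowing only the verifier's challenges and the final claimed evaluation, which it generates on its own. The aggregation and compression steps (Protocols~\ref{protocol:tensor-op-aggr} and~\ref{protocol:tensor-op-aggr-optimized}) and the re-indexing sumcheck \eqref{eq:sc-reindex} are themselves instances of the sumcheck protocol, so their ZK simulators compose; the only public inputs involved ($\me{\beta}{\cdot,\cdot}$, the permutation indicators $p_k$, the scaling vectors $\mathbf{s}_{Q+R}, \mathbf{s}_Q$) carry no secret information.

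I would assemble these into a hybrid sequence: $H_0$ is the real view; in $H_1$ replace each proof-of-evaluation transcript by $\mathcal{S}_\text{eval}$'s output (indistinguishability from ZK of Hyrax); in $H_2$ replace each sumcheck transcript by the ZK-sumcheck simulator's output (indistinguishability from ZK of the sumcheck variant); in $H_3$ replace each commitment by a commitment to zero (indistinguishability from the hiding property). Then $H_3$ is exactly the ideal view, and a union bound over the polynomially many ($O(T/T')$ outer iterations, each with $O(N_T)$ commitments, $O(N_A)$ sumchecks, $O(N_T)$ openings, all $\poly{\lambda}$) hybrids gives $\negl{\lambda}$ total advantage. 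I expect the main obstacle to be the \textbf{sequencing of information flow between the sub-simulators}: the real protocol determines evaluation points and claimed values through a forward cascade (sumcheck randomness $\to$ evaluation claims $\to$ proofs of evaluation), whereas each component simulator wants to choose these quantities freely; care is needed to order the simulator's work so that every value it hands off downstream has already been fixed consistently upstream (in particular, the re-indexing sumcheck of Line~\ref{alg-line:zkdl-reindex} consumes the claims output by the aggregated sumcheck of Line~\ref{alg-line:zkdl-sumcheck}, which must then match the evaluation points used in Line~\ref{alg-line:zkdl-open}). A secondary subtlety is confirming that the public data in FAC4DNN's aggregation — especially the $p_k$ permutation tables and the family partitions $\mathcal{T}_i, \mathcal{A}_i$ — is genuinely independent of the secret witness, so that exposing it in the simulated transcript is harmless; this follows because these are fixed by the agreed-upon architecture and training logic, not by the data or weights.
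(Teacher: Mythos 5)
Your proposal follows the same modular approach as the paper's proof sketch: decompose $\mathcal{S}$ into per-tensor-class commitment simulators $\mathcal{S}_1^{\mathcal{T}_i}$ (hiding of Pedersen/Hyrax) and per-operation-family simulators for the sumcheck and proof-of-evaluation phases (ZK of sumcheck and of the evaluation proof), then compose. You additionally make explicit the hybrid sequence $H_0 \to H_3$, the concrete realization of $\mathcal{S}_1$ as commitments to zero tensors, and the sequencing constraint between sub-simulators (sumcheck challenges fixing evaluation points before the opening simulator runs) — all of which the paper's sketch treats implicitly under ``inherent independence,'' so your version is a faithful and somewhat more careful rendering of the same argument.
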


\begin{proof}[Proof sketch of Theorem \ref{thm:zero-knowledge}] 
    Firstly, by leveraging the zero-knowledge sumcheck protocols, for each tensor operation \(\mathcal{A}_j\), there is a simulator \(\mathcal{S}^{\mathcal{A}_j}\) that indistinguishably simulates the execution of the sumcheck protocol with sole reliance on oracle access to validate the correctness of the aggregated operations among them. In contrast, for each tensor class \(\mathcal{T}_i\), there exists a pair of simulators, denoted as \(\left(\mathcal{S}^{\mathcal{T}_i}_1, \mathcal{S}^{\mathcal{T}_i}_2\right)\). Specifically, \(\mathcal{S}_1^{\mathcal{T}_i}\) is tasked with simulating the generation of the commitment, while \(\mathcal{S}_2^{\mathcal{T}_i}\) simulates the proof of evaluation, contingent on oracle access to ascertain the evaluation at the precise point the committed tensor is assessed.

    Building on this, given that the randomness across all aforementioned simulators remains independent, the overall simulator for zkDL can be architected in the following manner:
    
    \begin{enumerate}
        \item \(\mathcal{S}_1\), realized as a composite of all \(\mathcal{S}_1^{\mathcal{T}_i}\)s, is responsible for simulating the creation of commitments.
        \item With oracle access to the integrity of the comprehensive training process, that is, the correctness of all aggregated tensor operations, \(\mathcal{S}_2\) emerges as a combined entity of both \(\mathcal{S}^{\mathcal{A}_j}\)s and \(\mathcal{S}^{\mathcal{T}_i}_2\)s. Its role is to simulate proofs of tensor operations' correctness through sumchecks and the subsequent proofs of evaluations.
    \end{enumerate}
    
    As a consequence, the composite simulated transcript, owing to the inherent independence between the components generated by these simulators, remains computationally indistinguishable from an authentic transcript.
\end{proof}

% that's all folks
\end{document}